\def\eqref#1{equation~\ref{#1}}
\def\1{\bm{1}}
\DeclareMathAlphabet{\mathsfit}{\encodingdefault}{\sfdefault}{m}{sl}
\SetMathAlphabet{\mathsfit}{bold}{\encodingdefault}{\sfdefault}{bx}{n}
\newcommand{\cmark}{\ding{51}}
\newcommand{\xmark}{\ding{55}}
\newcommand{\sysname}[0]{{\sc ProRe}\xspace}
\newcommand{\ie}{{\it i.e.,}\xspace}
\newtheorem{lemma}{Lemma}               
\theoremstyle{remark}
\title{\sysname: A Proactive Reward System for GUI Agents via Reasoner–Actor Collaboration}
\author{Gaole Dai\thanks{Work done during internship at Microsoft Research.} \\
NTU \\
% \texttt{gaole001@ntu.edu.sg} \\
\And
Shiqi Jiang\thanks{Corresponding authors}\\
Microsoft Research \\
% \texttt{shijiang@microsoft.com} \\
\And
Ting Cao \\
Tsinghua University \\
% \texttt{ting.cao@air.tsinghua.edu.cn} \\
\And
Yuqing Yang \\
Microsoft Research \\
% \texttt{yuqyang@microsoft.com} \\
% \texttt{liyuanchun@air.tsinghua.edu.cn} \\
\And
Yuanchun Li \\
Tsinghua University \\
\And
Rui Tan\\
NTU \\
% \texttt{tanrui@ntu.edu.sg} \\
\And
Mo Li\footnotemark[2]\\
HKUST \\
% \texttt{lim@cse.ust.hk} \\
\And
Lili Qiu \\
Microsoft Research \\
% \texttt{liliqiu@microsoft.com} \\
}
\begin{document}

\maketitle

\begin{abstract}
Reward is critical to the evaluation and training of large language models (LLMs). However, existing rule-based or model-based reward methods struggle to generalize to GUI agents, where access to ground-truth trajectories or application databases is often unavailable, and static trajectory-based LLM-as-a-Judge approaches suffer from limited accuracy. To address these challenges, we propose \sysname, a proactive reward system that leverages a general-purpose reasoner and domain-specific evaluator agents (actors). The reasoner schedules targeted state probing tasks, which the evaluator agents then execute by actively interacting with the environment to collect additional observations. This enables the reasoner to assign more accurate and verifiable rewards to GUI agents. Empirical results on over 3K trajectories demonstrate that \sysname improves reward accuracy and F1 score by up to 5.3\% and 19.4\%, respectively. Furthermore, integrating \sysname with state-of-the-art policy agents yields a success rate improvement of up to 22.4\%. The source code is available at \url{https://github.com/V-Droid-Agent/ProRe}.

\end{abstract}
\section{Introduction}
\label{sec:intro}

% \jsq{Reward system is essential to enable self-involving GUI agents with in the era of experience}

% Recently, verifiable rewards have attracted increasing attention in the research community, as they have proven to be a crucial component in the large-scale training of large language models (LLMs) and vision-language models (VLMs). In this paradigm, LLMs/VLMs serve as the policy models, attempting to complete user queries by generating analyses, calling functions, or interacting with graphical user interfaces (GUI). Rewards are then assigned by evaluating whether the tasks have been successfully accomplished. These successful answers or trajectories can be leveraged to curate high-quality training datasets, supporting a variety of training objectives. Moreover, verifiable rewards also provide a standardized mechanism for benchmarking the performance of policy models.

Verifiable rewards are pivotal for enabling the continual evolution of large language model (LLM)-based agents~\cite{wang2024benchmark, guo2025deepseek, silver2025welcome}. Within this paradigm, LLMs operate as policy networks, undertaking user requests to generate reasoning, invoke tools and functions, and manipulate graphical user interfaces (GUIs)~\cite{qi2024webrl}. Rewards function as quantitative feedback signals that steer the agent's learning process~\cite{gao2024designing}, promoting optimal behaviors while discouraging suboptimal actions.

Reinforcement learning with verifiable rewards (RLVR) has the potential to significantly advance GUI agents~\cite{wang2024distrl, xu2025mobilerlonlineagenticreinforcement, wang2025ui}. A simple yet effective binary reward for GUI automation is to assess whether the specified task has been successfully completed. To obtain such a reward signal, existing methodologies could be generally categorized into rule-based and LLM-based, as illustrated in Figure~\ref{fig:teaser}. In the rule-based paradigm, human experts manually construct verification code snippets to ascertain the realization of the intended state for each task. For instance, AndroidWorld~\cite{rawles2024androidworld} and WindowsAgentArena~\cite{bonatti2024windows} datasets contain more than 116 and 150 manually engineered unit testing code, respectively, to provide grounded signals of task accomplishment for individual GUI automation tasks. While this approach offers high accuracy, it is inherently limited in scalability, as the manual creation of unit testing scripts demands substantial human effort and resources, thereby preventing its use as a reward mechanism for large-scale GUI agent training.

LLM-as-a-judge is thus proposed to enable scalable agentic rewards \cite{gu2024survey, bai2024digirl}. Leveraging the capabilities of advanced LLMs such as GPT-4o, this approach evaluates GUI task trajectories, often represented as screenshots, by prompting the model with queries such as, \textit{“Based on the task trajectory, please determine if the task is completed”}. LLM-as-a-judge offers an autonomous and scalable framework for allocating reward signals \cite{wang2024distrl}. However, we observe that this approach is considerably less effective for rewarding GUI agents.

The rationale underlying the failures of LLM-as-a-judge for GUI agents is twofold: \emph{incomplete state observability} of GUI tasks and \emph{limited domain-specific capabilities} of LLMs.

First, GUI task states are typically monitored \emph{passively} through specific modalities, such as screenshots \cite{gou2024navigating}. However, owing to the inherent complexity and dynamic nature of GUI interactions, these states frequently remain incompletely observable. For instance, as depicted in Figure~\ref{fig:teaser}, during the monitoring of the \textit{“taking two photos”} task exclusively through screenshots, the captured lacks critical success indicators, thereby precluding even human evaluators from reliably ascertaining task completion. Moreover, observations are typically conducted at fixed intervals, potentially omitting critical state transition details. Consequently, GUI state observability remains inherently incomplete, thereby compromising the efficacy of the reward system.

% Trajectory-based LLM-as-a-Judge methods rely heavily on generalist LLMs/VLMs, which often lack domain-specific knowledge of user interfaces (UIs) and applications. The complexity and richness of trajectory information, combined with potentially misleading actions from GUI agents, can easily confuse the judge. Furthermore, the judge’s performance is constrained by partial observations of the environment. For simplicity and clarity, UI designs typically hide certain key information to fit within limited screen space and enhance user experience. Meanwhile, screenshots are collected either at fixed sampling rates or passively, which often results in missing critical dynamic information in the environment. To address these limitations, our key idea is to enable the agent itself to proactively verify the environment state, rather than relying on passive judgments of static trajectories.

\begin{figure}[!t]
    \centering
    \includegraphics[width=.95\linewidth]{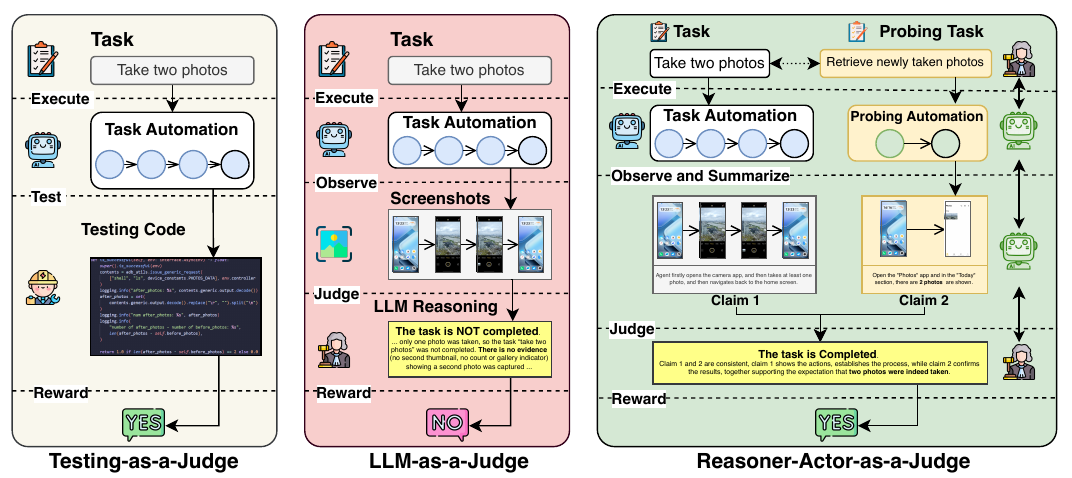}
    \caption{\sysname proposes to reward GUI agents using reasoner-actor-as-a-judge, rather than relying on expert to hand craft testing code or LLM to judge static trajectories.}
    \label{fig:teaser}
    \vspace{-1.5em}
\end{figure}

Second, evaluating GUI task states requires domain-specific GUI knowledge and expertise, which general-purpose LLMs utilized in reward systems, such as GPT-4o and Gemini, fundamentally lack~\cite{dai2025advancing}. Most general-purpose LLMs demonstrate suboptimal performance on GUI-related tasks~\cite{qin2025ui}. Although post-training may enhance their domain-specific proficiency, training of a domain-specific reasoner as the reward model still necessitates annotated datasets, thereby constraining its scalability. Consequently, deploying a general-purpose LLM to assess intricate domain-specific details intrinsically undermines the efficacy of the reward system.

To develop a scalable and accurate reward for GUI agents, this paper introduces \sysname, a proactive reward system based on reasoner–actor collaboration. The key idea of \sysname is to introduce the additional \emph{state probing} tasks planned by the reasoner. These tasks are executed by domain-specific evaluator agents (actors) that interact with the environment to retrieve key states relevant for task verification. Instead of relying solely on the policy agent’s execution trajectory, \sysname assigns rewards through high-level reasoning over the outcomes of these probing tasks.

% To develop a scalable and accurate reward for GUI agents, this paper introduces \sysname. The key idea of \sysname is to introduce the additional \emph{state probing} tasks, executed by evaluator agents with GUI expertise to look for key states. Rather than relying solely on the policy agent’s execution trajectory, \sysname allocates rewards based on high-level reasoning over the outcomes of these probing tasks.

Specifically, the reasoner, \ie GPT-4o, schedules the state probing tasks, conditioned on the original task objective and its expected outcome. After the policy agent finishes execution, evaluator agents are invoked to automate these probing tasks. They then summarize both the original task trajectory and the probed UI states into high-level, verifiable claims. The reasoner performs final judgment through chain-of-claims reasoning, which analyzes the \emph{consistency} between the policy agent’s claims and those generated from the evaluators' probing. An intuitive example is illustrated in Fig.~\ref{fig:teaser}: given the original task \textit{"taking two photos"}, a probing task \textit{"retrieving newly taken photos"} is formulated, with the expected outcome that \textit{"two photos should have appeared in the gallery"}. The evaluator agent executes this probing task and observes that \textit{"there are two newly captured photos from 11:00 AM to the present"}. The reasoner then assesses the consistency between the claims, thereby probably concluding that the original task has been successfully accomplished.

% Specifically, the state probing task is scheduled by a reasoner, \ie GPT-4o, conditioned on the original task objective and its expected outcome. The evaluator agents then automates those probing tasks after the execution of policy agents and summarize the policy agents' trajectories and additional probed states into high-level claims. The reasoner renders a final judgment with chain-of-chaims reasoning, which analyze the \emph{consistency} between the policy and evaluator claims. An intuitive example is depicted in Fig.~\ref{fig:teaser}. Given the original task \textit{"taking two photos"}, a state probing task, \textit{"retrieving newly taken photos"}, is formulated, along with the expectation that \textit{"two photos should have appeared in the gallery"}. The evaluator agent subsequently executes this probing task, resulting in the outcome \textit{"there are two newly captured photos from 11:00 AM to the present"}. The reasoner then assesses the consistency between the claims, thereby probably concluding that the original task has been successfully accomplished.

These designs address the fundamental challenge of rewarding for GUI agents in the following ways: 1) \sysname transforms the reward system from passive monitoring to proactive probing. The introduction of state probing tasks provides a complementary perspective to ascertain whether the original task has been accomplished; 2) \sysname decouples the general-purpose reasoner from domain-specific GUI judgments. Domain-specific actions are executed by domain-specific actors (evaluator agents), while the general-purpose reasoner concentrates solely on high-level logical consistency verification, which falls within the core competencies of general-purpose LLMs; 3) \sysname introduces a unique opportunity for co-evolution between the policy agent and the reward system. The execution of state probing tasks can be further optimized in tandem with the evaluator (policy) agent’s improvement, enabling a more sophisticated reward system that, in turn, facilitates accelerated progress for the policy agent.

We evaluate the performance of \sysname on typical GUI tasks. Specifically, \sysname is evaluated on over 3K distinct task traces collected from three benchmarks: AndroidWorld \cite{rawles2024androidworld}, AndroidLab \cite{xu2024androidlab}, and MobileAgentBench \cite{wang2024mobileagentbench}. The results demonstrate that, compared to existing state-of-the-art LLM-as-a-Judge approaches, \sysname enhances reward accuracy and F1 score by up to 5.3\% and 19.4\%, achieving an average accuracy of 93.7\%, thereby becoming the first reward system to surpass 90\% reward accuracy. In addition, pilot experiments on OSWorld and OSWorld-Chrome \cite{xie2024osworld} show that \sysname improves reward accuracy by 4.0\% on PC tasks and 6.5\% on web tasks. Moreover, when incorporated into policy agents to guide their test-time scaling strategy, \sysname elevates the success rate by at most 22.4\%. 

In summary, the key contributions of this works are as follows:

% With the additional evidence collected, the proposed reward system achieves significantly higher reward accuracy across diverse dynamic benchmarks and policy agents. In particular, its evaluation accuracy improves by 10.3\%, 8.4\%, and 9.5\% on the trajectories of state-of-the-art (SOTA) agents on the AndroidWorld, AndroidLab, and MobileAgentBench datasets, respectively. Furthermore, when integrated into V-Droid to guide its test-time scaling process, \sysname boosts the success rate of V-Droid to 68.1\%, establishing a new state-of-the-art performance on the AndroidWorld benchmark. The key contributions of this work are as follows:

\begin{itemize}[left=0pt]
    \item  We systematically study and empirically demonstrate the limitations of existing trajectories-based LLM-as-a-judge for GUI agents.
    \item We propose \sysname, a proactive reward system with a general reasoner that performs high-level scheduling and reasoning and domain-specific evaluator agents that actively probe states.
    \item \sysname achieves consistently higher reward accuracy and F1 score on different agents and benchmarks, and significantly improves the success rate of policy agents through test-time scaling.
\end{itemize}
% 1) We systematically study and empirically demonstrate the limitations of existing reward models for GUI agents that rely on static agent trajectories.

% 2) We propose \sysname, a proactive reward system with a generalist reasoner that performs high-level reasoning and evaluator agents that actively interact with the environment to collect verifiable evidence.

% 3) \sysname achieves consistently higher reward accuracy across multiple agents and benchmarks, and significantly improves the success rate of SOTA agents through test-time scaling.
\section{Related Works}

% \subsection{From Reward Models to Reward Systems}
% General reward models are widely used to support the experience-based training of LLMs, without the need of pre-collected ground truth or constructing rules with domain expertise \cite{gu2024survey, son2024llm}. Existing general reward models assign absolute scores to each answer or relative scores comparing answer pairs \cite{lin2025learning, xiong2025stepwiser, liu2025inference}. Some works build reward systems with agent-as-a-judge, which leverages tools, such search, executing codes or reading materials, to assist the reward generation \cite{zhuge2024agent, yu2025ais}. However, those reward systems cannot be applied to reward GUI agents in the wild, which executes different types of tasks that cannot be handled with predefined tool boxes.

\subsection{General Reward Models in Broader Topics}
General reward models are widely used to support experience-based training of LLMs, without requiring pre-collected ground truth or handcrafted rules from domain experts \cite{gu2024survey, son2024llm}. Such models typically assign either absolute scores to individual answers or relative scores by comparing answer pairs \cite{lin2025learning, xiong2025stepwiser, liu2025inference}. Beyond these, some works have proposed building \emph{reward systems} through the agent-as-a-judge paradigm, where agents are equipped with tools such as web search, code execution, or document reading to assist reward generation \cite{zhuge2024agent, yu2025ais}. However, these reward systems remain limited in scope and cannot be directly applied to GUI agents in the wild, which execute diverse task types that cannot be verified by a predefined toolbox.

\subsection{GUI Agents for Tasks Automation}
LLM-based GUI agents, which operate across websites, desktops, and smartphones to handle a wide spectrum of tasks ranging from professional work to everyday activities, have recently attracted significant attention \cite{lai2025computerrl, dai2025advancing, qin2025ui, gu2025ui, ye2025mobile}. LLMs are primarily employed either as generators to propose actions and decisions or as verifiers, to evaluate actions \cite{gou2024navigating, qin2025ui, liu2025infigui, dai2025advancing}. To improve the decision-making ability of GUI agents, various training paradigms—including supervised fine-tuning, direct preference optimization (DPO), and reinforcement learning—have been applied on large-scale datasets \cite{luo2025gui, tang2025gui, dai2025advancing, wang2024distrl}. Within this pipeline, accurate reward signals are crucial, as they enable automatic data collection at scale, which in turn underpins both dataset curation and model training \cite{tang2025gui, li2025mobileuse, qi2024webrl}.

% LLM-based GUI agents, which operates websites, desktops, and smartphones to handle a wide range of tasks from working to daily life activites, have attracted a lot of attention \cite{lai2025computerrl, dai2025advancing, qin2025ui, gu2025ui, ye2025mobile}. LLMs are used mainly used as generators to generate decision or verifiers to score actions. Besides, some works leverage grounding models to output the coordinates based on the analysis from the LLMs \cite{gou2024navigating, qin2025ui,liu2025infigui}. Different training methods, such as supervised fine-tuning, DPO and reinforcement learning, has been conducted to enhance the decision-making capabilities of GUI agents with large-scale dataset \cite{luo2025gui, tang2025gui, dai2025advancing, wang2024distrl}. Accurate reward empowers the automatic data collection pipeline, which is of vital importance for the large-scale dataset curation as well as the model training \cite{tang2025gui, li2025mobileuse, qi2024webrl}.

\subsection{Gaps between General Rewards and Rewards for GUI Agents}
There are some pioneering works on designing reward methods for GUI agents \cite{bai2024digirl, wang2024distrl, luo2025gui, lai2025androidgen}. They develop outcome or step-wise reward models to judge the success of GUI agents passively using the trajectories of GUI agents \cite{tang2025sea, hu2025guiding}. However, their performances are far from satisfying due to the partial observations of GUI agents to the states and the lack of domain knowledge of general-purpose LLM. One concurrent work, \cite{gou2025mind2web} constructs rubic trees for predefined web search tasks and checks key points with url, which lacks generalizability to in-the-wild tasks without such url. Instead, \sysname is the first reward system for GUI agent with a generalist reasoner to schedule state probing and evaluator agents to proactively probe states. 

% DistRL provides examples and screenshots to VLMs and prompt the reward models to compare expections with those observations from the trajectories. WebRL \cite{qi2024webrl} optimize the outcome reward model with supervised fine-tuning on self-curated website datasets. Step-Critic \cite{lai2025androidgen} propose to let LLM decompose users' goals into subtasks, and the reward model is instructed to check the finish status of each subtask.

% \dgl{introduce existing works in different types of rewards. 1) hard to scale for rule-based. 2) outcome reward follows llm-as-a-judge and rely on the traj itself; 3) progress reward is less practical and they rely on the outcome reward}
% \dgl{we focus on outcome reward model in this work}

\section{Proactive Reward System with Agent-in-the-loop} \label{sec:method}
% \label{headings}
\subsection{Problem Formulation.} Given the users instruction $\mathcal{G}$, a policy agent $\pi$ interacts with the environment consecutively, which forms a $N$ steps trajectory $\tau = (s_0, a_0, s_1, a_1, \dots, s_T)$. $s$ is the observation of $\pi$ on step $t$ and $a$ is the $t$-th actions. The goal is to generate an accurate binary outcome reward $r$ on $\tau$. \label{sec:problem_formulation}

\begin{lemma}
\label{lem:reward_success}
Let the success rate of the policy agent be $p_a$ and the reward accuracy be $p_c$. 
Then, under test-time scaling with trial budget $N$, the final success rate $P_{final}$ satisfies
\[
P_{final} = \frac{p_ap_c}{q}\big[1-(1-q)^N\big] + p_a(1-q)^N, 
\quad \text{where } q = p_ap_c+(1-p_a)(1-p_c).
\]
In particular, given $p_a > 0$, $P_{final}$ monotonically increases with respect to $p_c$ whenever $p_c > 0.5$.
\end{lemma}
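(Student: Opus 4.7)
The plan is to first rewrite $P_{final}$ in a form that isolates the baseline success rate. Combining the identity $\tfrac{1-(1-q)^N}{q} = \sum_{k=0}^{N-1} u^k$ (with $u := 1-q$) with the algebraic simplification $\tfrac{p_a p_c}{q} - p_a = \tfrac{p_a(1-p_a)(2p_c-1)}{q}$, one obtains the compact form
\[
P_{final} \;=\; p_a \;+\; p_a(1-p_a)(2p_c-1)\sum_{k=0}^{N-1} u^k,
\qquad u \;=\; p_a(1-p_c) + p_c(1-p_a).
\]
This rewriting makes the dependence on $p_c$ transparent, as the sign-indicative factor $(2p_c-1)$ now appears linearly.

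Next I would differentiate with respect to $p_c$. With $S := \sum_{k=0}^{N-1} u^k$, $T := \sum_{k=1}^{N-1} k u^{k-1}$, and $u'(p_c) = 1-2p_a$, the chain rule yields
\[
\frac{dP_{final}}{dp_c} \;=\; p_a(1-p_a)\bigl[\,2S \;+\; (2p_c-1)(1-2p_a)\,T\,\bigr].
\]
Since $p_a(1-p_a)\ge 0$ and $S>0$, the sign of the derivative is controlled by the bracketed quantity. When $p_a \le 1/2$, both $(1-2p_a)\ge 0$ and $(2p_c-1)>0$, so the bracket is manifestly non-negative and monotonicity follows immediately in this regime.

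The main obstacle is the case $p_a > 1/2$: the second bracketed term is then negative, and one must prove that $2S$ dominates $\alpha T$, where $\alpha := (2p_a-1)(2p_c-1) \in (0,1)$. Here the key identity is $2q-1 = (2p_a-1)(2p_c-1)$, which forces $u = (1-\alpha)/2 < 1/2$. Plugging in the closed forms $S = (1-u^N)/(1-u)$ and $T = [1-Nu^{N-1}+(N-1)u^N]/(1-u)^2$, clearing the positive denominator $(1-u)^2$, and substituting $\alpha = 1-2u$, the inequality $2S \ge \alpha T$ reduces to showing non-negativity of the polynomial
\[
\Phi(u) \;=\; (1-u^N) \;+\; N\,u^{N-1}(1-u)(1-2u).
\]
Because $u \le 1/2$ makes both $(1-u)$ and $(1-2u)$ non-negative while $(1-u^N)>0$ for $u<1$, the polynomial $\Phi(u)$ is strictly positive, closing the argument.
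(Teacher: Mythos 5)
Your proposal addresses only the second claim of the lemma (monotonicity in $p_c$) and silently treats the closed-form expression for $P_{final}$ as given. The lemma, however, asserts that expression as part of its conclusion: $P_{final}$ is defined operationally as the success rate of the test-time-scaling procedure (roll out up to $N$ independent trials, submit the first trial the reward model judges positive, otherwise submit the last trial), and the paper's proof spends its first half deriving the formula from that process --- the per-trial positive-judgment probability $q = p_a p_c + (1-p_a)(1-p_c)$, the conditional success probability $p_a p_c / q$ given a positive judgment, the probability $1-(1-q)^N$ that some trial is judged positive, and the unfiltered fallback contributing $p_a(1-q)^N$. That derivation, and the stopping-rule/independence assumptions it rests on, is absent from your write-up, so as a proof of the lemma as stated it is incomplete.

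For the part you do prove, your argument is correct and considerably more rigorous than the paper's, which merely asserts $\partial P_{final}/\partial p_c > 0$ with ``details omitted for brevity.'' I checked the key steps: the rewriting $P_{final} = p_a + p_a(1-p_a)(2p_c-1)\sum_{k=0}^{N-1}u^k$ with $u=1-q$ is an identity; the derivative $p_a(1-p_a)\bigl[2S + (2p_c-1)(1-2p_a)T\bigr]$ is right; the identity $2q-1=(2p_a-1)(2p_c-1)$ does force $u<1/2$ in the hard case $p_a>1/2$; and the reduction of $2S\ge\alpha T$ to $\Phi(u)=(1-u^N)+Nu^{N-1}(1-u)(1-2u)\ge 0$ is exact (both sides of the cleared inequality expand to $1+Nu^{N-1}-(3N+1)u^N+2Nu^{N+1}$), with $\Phi(u)>0$ immediate on $[0,1/2]$. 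Two cosmetic caveats: at $p_a=1$ the factor $p_a(1-p_a)$ vanishes and $P_{final}\equiv 1$, so monotonicity is only non-strict there (a boundary case the lemma itself glosses over), and $\alpha\in(0,1]$ rather than $(0,1)$ until that endpoint is excluded. If you prepend the short probabilistic derivation of the formula, your write-up would be a strictly stronger proof than the one in the paper.
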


A full proof is deferred to Appendix \ref{appendix:tts}. 
Our work focuses on improving $p_c$ and $P_{final}$. 

\begin{figure}[!t]
    \centering
    \begin{minipage}{0.50\linewidth}
        \centering
        \includegraphics[width=\linewidth]{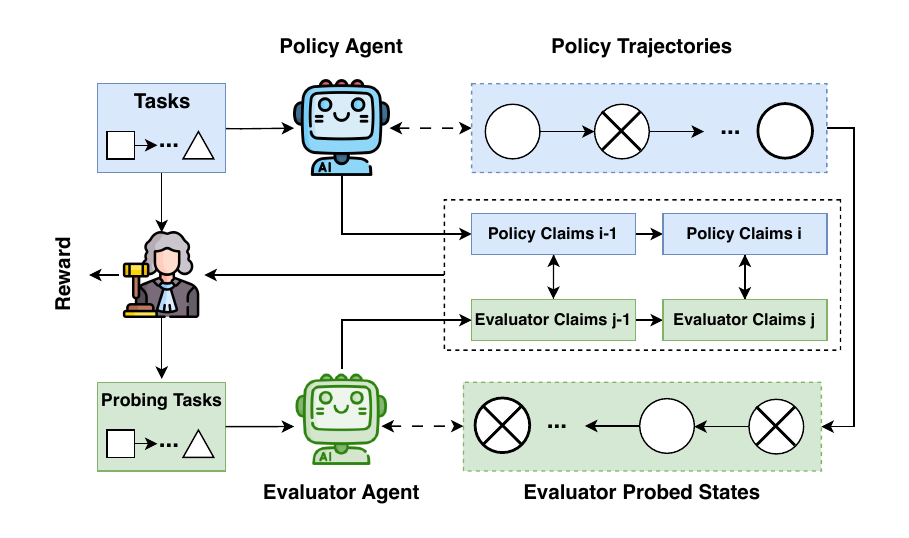}
        \caption{\sysname overview.}
        \label{fig:framework}
    \end{minipage}\hfill
    \begin{minipage}{0.45\linewidth}
        \centering
        \includegraphics[width=\linewidth]{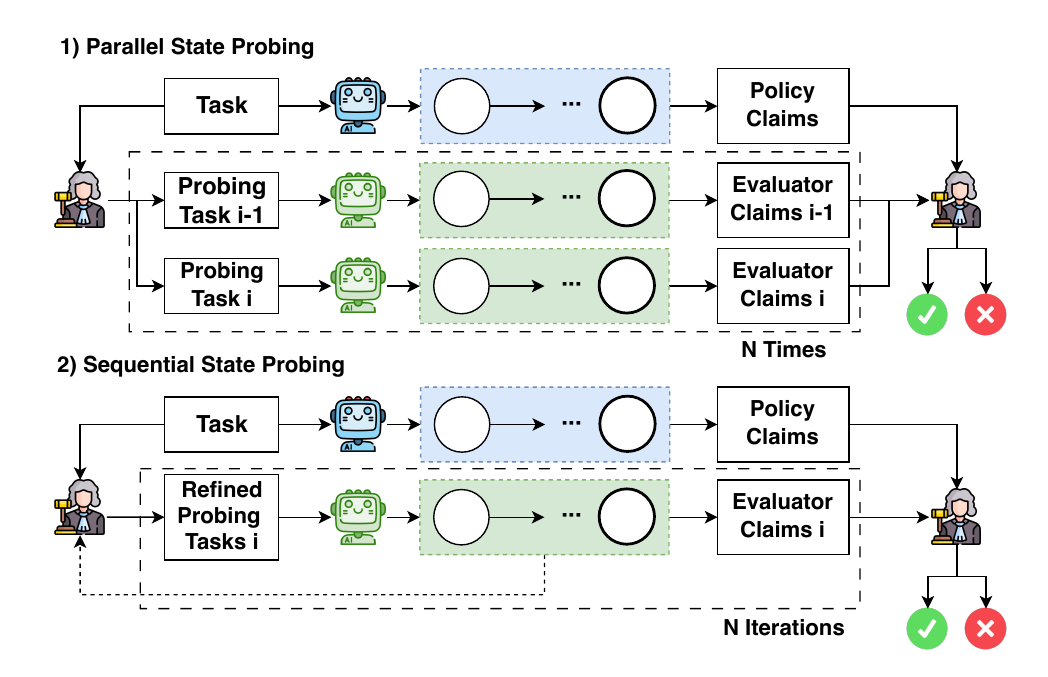}
        \caption{Test-time Scaling of \sysname.}
        \label{fig:tts}
    \end{minipage}
    \vspace{-1.5em}
\end{figure}

\subsection{Framework Overview}
 
% \jsq{Try to highlight 1) decouple common reasoner and specific actor, 2) turn domain-specific reasoning to generic casual verification}

Instead of applying LLM-as-a-Judge to generate a reward $r$ from trajectories, 
\sysname introduces a general LLM reasoner $\mathcal{J}$ working in collaboration with domain-specific evaluator agents $\pi_e$ for state probing, as shown in Figure~\ref{fig:framework}. Given the original tasks, the reasoner $\mathcal{J}$ first schedule probing tasks for the evaluator agents. Then the evaluators $\pi_e$ further explore the environment to collect key state information. The policy agent’s trajectories and the probed states are then summarized by $\pi_e$ into claims about task progress. 
Finally, the reasoner $\mathcal{J}$ analyzes the relationships and consistency among these claims, performing chain-of-claims reasoning to generate the outcome reward.

% Instead of directly leveraging LLM-as-a-judge to generate reward $r$ on the trajectories, \sysname is formulated with a general-purpose LLM reasoner $\mathcal{J}$ with domain-specific actors, or say, evaluator agents $\pi_e$, for state probing. The reasoner $\mathcal{J}$ first scatches state probing tasks for the evaluator agents given the original tasks. Based on those state probing tasks, $\pi_e$ further explore the environment to probe key information on the states. The policy agents trajectories and the probing states are then summarized by $\pi_e$ into chaims about tasks progress. Finally, the reasoner $\mathcal{J}$ generates the outcome reward with chain-of-claims by analyzing the inter-relationships and the consistency between claims.

% 
% \dgl{Introduce the details of the framework, including the status monitor to filter task-related information, the evaluation tasks generation, the proactive evidence search, and the final reward generation in two ways.}

% \dgl{Discribe how the prompt is formulated. Input-output. give some exmaples to help understand. Use figure to illustarte the process?}

% Besides, the evaluation task can be executed right after the policy agent finishes, without re-navigating from scratch. 
\subsection{Proactive Agent-in-the-loop Probing} \label{sec:probing}
% The partial observation prevents the LLM-as-a-judge to make accurate decisions. Specifically, existing methods uses the action histories and working memory from the policy agents traces as input, which are originally designed for continuous interaction and progress tracking rather than outcome judgement.

The partial observation to the GUI task states by the policy agents prevents LLM-as-a-judge to make accurate decisions. To handle this problem, \sysname introduces a set of evaluator agents to proactively probe states and collect additional information. The general-purpose LLM first schedule the state probing tasks for the evaluator agents based on the tasks inputs.

\textbf{State Probing Tasks Scheduling.} The general-purpose LLM reasoner is instructed to analyze the expectations and requirements specified in the original user instructions $\mathcal{G}$, and to identify the key states necessary for judging task success. 
Based on this analysis, the reasoner formulates state probing tasks, which are then issued as instructions for the evaluator agents to retrieve the corresponding key states from the environment.
\begin{equation}
   G_e \sim \mathcal{J}(G \mid \mathrm{Exp}, \mathcal{E}, L), 
   \quad \mathrm{Exp} = \mathcal{J}(G), 
   \quad G \in \mathcal{G}.
\end{equation}
where $\mathrm{Exp}$ is the analyzed expectations for the task $G$, $\mathcal{E}$ refers to the few shot examples provided and $L$ is the summarized guidelines for mapping the tasks to the probing tasks. To illustrate, when instructing the policy agent to delete a file \textit{A}, the corresponding state probing task $G^e$ is to search whether \textit{A} still exists in the target applications. The generation of expectations and state-probing tasks primarily rely on the reasoning capability of general-purpose LLM on analyzing the users expectations without the need of much domain-specific knowledge of APP and UI interactions. More examples on the state probing tasks are provided in Appendix~\ref{appendix:probing_tasks_example}.

% which are offloaded to evaluator agents.

% \begin{table}[!t]
% \centering
% \renewcommand{\arraystretch}{1.2}
% \begin{tabular}{l l c c}
% \toprule
% \textbf{Benchmark} & \textbf{Task Type} & \textbf{SR} & \textbf{Steps} \\
% \midrule
% \multirow{2}{*}{AndroidWorld}
%     & Dual-OB & \textbf{66.7\%} & \textbf{6.2} \\
%     & Others                & 53.6\%          & 14.7 \\
% \midrule
% \multirow{2}{*}{MobileAgentBench}
%     & Dual-OB & \textbf{64.0\%} & \textbf{6.8} \\
%     & Others                & 44.0\%          & 11.9 \\
% \midrule
% \multirow{2}{*}{AndroidLab}
%     & Dual-OB & \textbf{} & \textbf{} \\
%     & Others                &        &  \\
% \bottomrule
% \end{tabular}

\begin{table}[!t]
\centering
\caption{The probing tasks are generally easier than the execution tasks.}
\begin{tabular}{l c c c c c c}
\toprule
\multirow{2}{*}{\textbf{Task Type}} & \multicolumn{2}{c}{AndroidWorld} & \multicolumn{2}{c}{MobileAgentBench} & \multicolumn{2}{c}{AndroidLab} \\
\cmidrule(lr){2-3} \cmidrule(lr){4-5} \cmidrule(lr){6-7}
 & SR & Steps & SR & Steps & SR & Steps \\
\midrule
State Probing & 66.7\% & 6.2 & 64.0\% & 6.8 & 65.9\% & 6.1 \\
Execution           & 53.6\%         & 14.7         & 44.0\%         & 11.9         & 27.5\%          & 11.8 \\
\bottomrule
\end{tabular}
\label{tab:dualob_subcols}
\vspace{-1em}
\end{table}

% \begin{table}[!t]
% \centering
% \renewcommand{\arraystretch}{1.2}
% \begin{tabular}{lccc}
% \toprule
% \textbf{Task Type} & \textbf{AndroidWorld} & \textbf{MobileAgentBench} & \textbf{AndroidLab} \\
% \midrule
% \textbf{Dual-OB (SR / Steps)} & \textbf{66.7\% / 6.2} & \textbf{64.0\% / 6.8} & \textbf{72.0\% / 8.4} \\
% Others (SR / Steps) & 53.6\% / 14.7 & 44.0\% / 11.9 & 33.2\%/ 16.2 \\
% \bottomrule
% \end{tabular}
% \caption{Success Rate (SR) and average number of Steps for Dual-OB vs Others across datasets}
% \label{tab:sr_steps_by_task}
% \end{table}

\textbf{State Probing with Evaluator Agents.} Given the state probing tasks $G_e$, evaluator agents are provoked to interact with the environment in a step-wise manner to collect additional observations on key states right after the execution of the policy agent. 
\begin{equation}
    s^e_{t+1} = \mathcal{F}(s^e_t, a^e_{t}),\quad a^e_{t} = \pi_e(s^{\pi_e}_t, G_e)
\end{equation}
where $\mathcal{F}$ is the status transition of the environment; $s^e_{t}$ is the state of captured by the evaluator agents. The probing process mainly leverages the UI-related knowledge in the GUI agent while minimizing the requirements on its reasoning capability on understanding users expectations. 

% The state probing task $\mathcal{T}_e$ is observed to be easier compared with other kinds of tasks such as creation, status modification, or deletion. As shown in Table \ref{tab:dualob_subcols}, the success rate of the state probing tasks is 15\% higher than on other kinds of tasks and the average trajectory length is 40\% shorter. While all those tasks require understanding of UI layout and app design logics, state probing tasks only need the agent to navigate to the correct page and do not need consecutive error-free execution. Because state probing is generally easier than other types of tasks, the evaluator agent (and consequently the reward system) tends to be more generalizable than the policy agent. As a reason, the reward system can be used to guide the test-time scaling or training of the policy agent.

\textbf{The Execution-Probing Gap.} The state probing task $\mathcal{T}_e$ is generally easier than other types of execution tasks such as creation, status modification, or deletion. 
As shown in Table~\ref{tab:dualob_subcols}, V-Droid \cite{dai2025advancing} achieves a 23.8\% higher success rate on state probing tasks and the trajectories are 50.3\% shorter on average. While both probing and execution tasks involve knowledge on UI and applications, probing only requires navigating to the correct page and does not demand consecutive error-free execution. Because of this relative simplicity, the evaluator agent, and by extension the reward system, is more generalizable than the policy agent. This generalizability allows the reward system to effectively guide both the test-time scaling and the training of policy agents.

We also notice that there are some long-horizon tasks that demand checking multiple states across different pages. In those cases, the probing tasks could be formulated into multiple subtasks, based on which the evaluator agent execute sequentially to obtain complete probed states.

% \dgl{Show the observation that it is easier to search for the information compared with executing tasks. So we believe the reward system is more generalizable compared with the policy agent. We can first scale the reward system and then train the policy.}
\subsection{Outcome Reward with Chain-of-Claims}
\textbf{Chain-of-Claims.} To avoid overwhelming the general-purpose LLM with too much low-level GUI details in the probed states, the evaluator agents summarize the trajectories of the policy agent and the probed states into chain-of-claims. Specifically, given a trajectory $\tau$ generated by the policy agent $\pi$, the evaluator agent observes this sequence and the additional probed UI states to form claims about task progress. We define two sets of claims:

1) $\mathcal{C}^\pi = \{ c^\pi_1, c^\pi_2, \dots, c^\pi_{N_\pi} \}$: $N_\pi$ claims generated from the policy agent’s trajectory $\tau$.

2) $\mathcal{C}^{\pi_e} = \{ c^{\pi_e}_1, c^{\pi_e}_2, \dots, c^{\pi_e}_{N_{\pi_e}}\}$: $N_{\pi_e}$ claims made by the evaluator agents $\pi_e$. 

Each claim $c$ is structured as: 
\begin{equation}
    c=Claim(\tau_{g_j}= \{s_t, s_{t+1}, a_t\}), \text{where } g_j \in G
\end{equation}
where $\tau_{g_j}$ is a subtrajectory of the policy agent’s trajectory $\tau$ or a sequence of probed states produced by the evaluator agents. We instruct the evaluator agents to generate multiple claims covering different parts of the trajectory, which is observed to be more effective than segmenting trajectories via state clustering on learned embeddings.

Given these claims, the general-purpose LLM reasoner $\mathcal{J}$ performs chain-of-claims reasoning to produce the final reward by linking and comparing policy and evaluator claims:
\begin{equation}
    r = \mathcal{J}(G,\mathrm{Exp},\mathcal{C}),\quad
    \mathcal{C} = \left\{ c_i^\pi, c_j^{\pi_e}, r_{ij} \right\}
\end{equation}
where $r_{ij}$ denotes the relationship between a policy claim $c_i^{\pi}$ and an evaluator claim $c_j^{\pi_e}$. The relation can be confirming, contradicting, complementary, or unrelated.

\textbf{Claim Filters.} Irrelevant evidence and claims has the potential to compromise the accuracy of final judgments. Therefore, within the reasoner, we integrate a claim filter that explicitly identifies and eliminates irrelevant or misleading claims prior to the chain-of-claims. By prompting the reasoner, the claim filter systematically examines the relationship between each evaluator claim and the original task instruction, discarding any claims lacking a causal linkage to the target probing tasks.

\textbf{Minimizing Overhead.} The generation of claims on the policy agent trajectories and the probed states by the evaluator agents requires processing multiple screenshots and abundant screen descriptions. To minimize the processing overhead, we further filter out noisy states in the trajectories without harming the quality. Specifically, the states related to operations on home-screen and consecutive identical states (some actions do not lead to state transition) are excluded. When there are loops detected in the trajectories, the loops could be discarded if identified by the $\mathcal{J}$ to be unrelated with the task goal $G$ (e.g., some redundant explorations) only using HTML descriptions. 

% \textbf{Task-Specific Observation.} We notice that the reward model might overlook the details in the observed state due to the large amount of complex and noisy information embedded in the screenshots. Therefore, in \sysname, task-specific observations are extracted based on the screenshots and accessibility tree of each state. Specifically, one LLM, serve as a coach to monitor each step of the policy agent and then generate task-specific observation $s^T=\mathrm{LLM}(s$, including the key UI elements and missing expected content. The state $s^T$ is expressed in a JSON format and then translated into text for the final judgement.

% % \dgl{Introduce the way we extract subjective information from each page using a11y tree and screenshot. Discuss the benefits compared with history-only + last several screen-shot. It is even better compared with using all the screenshots as we 1) assign page-wise analysis and 2) do information filtering}
% \subsection{Reward Generation with Generalist Reward Model}

% \dgl{we have two ways:}

% \dgl{1) LLM-As-A-Judge with evidences from actors}

% \dgl{2) Assertion by comparing evidence and expectations}

% \dgl{Describe the guidelines and the prompt content.}

\subsection{Test-time Scaling for States Probing in \sysname} \label{sec: tts}
In some complex tasks or scenarios that a single state probing trial is insufficient for identifying the key evidence, two forms of test-time scaling, including parallel state probing and iterative state probing, are further incorporated to improve the state probing quality in \sysname shown in Figure~\ref{fig:tts}. 

\textbf{Parallel State Probing.} After the policy agent completes the trajectory for one task, the final state is distributed to multiple emulator instances. On each instance, the proactive state probing are conducted in parallel. Later, based on the claims from each evaluator agent and the policy agent, the LLm reasoner formulate the chain-of-claims and assigns the outcome reward. To support the parallel state probing, we record the actions per steps of policy agents and re-execute those actions in sequence on different emulator instances. The target UI elements are matched with the recorded actions via fuzzy matching on key parameters and elements semantics.

\textbf{Iterative State Probing.} The iterative state probing generates new state probing tasks based on the state probing task and claims in the previous round. Specifically, there are $N$ rounds of search. In round $n$, the probing tasks $G_e(n)$ are generated with:
\begin{equation}
G_e(n) \sim \mathcal{J}(G \mid \mathrm{Exp}, \mathcal{E}, L, G_e(n-1), \tau_e), G\in \mathcal{G}, n=1,\dots,N
\end{equation}
The results from different trials are aggregated via majority voting. The quality of the state probing tasks and the quality of collected states can be gradually improved based on the previous experience, which improves the overall reward accuracy.

% \textbf{Efficient Experience Replay.} Instead of running policy agents to execute tasks for each evaluator agent on different emulators, which is neither efficient nor consistent across trajectories due to randomness of LLMs, we propose two kinds of efficient experience replay of policy agents to support the test-time scaling of \sysname. 

% \dgl{Introduce the two ways for test-time scaling. Sequential and parallel evaluation tasks generation and evidence search.}

\section{Evaluation} \label{sec:evaluation}
\subsection{Experiment Settings} \label{sec:setting}

\begin{table}[t]
\centering
% \small
% \setlength{\tabcolsep}{5pt}
\caption{Reward accuracy and F1 across methods and policy agent.}
\begin{tabular}{ll*{3}{cc}cc}
\toprule
\multirow{2}{*}{\textbf{Method}} &
\multirow{2}{*}{\textbf{Last N States}} & \multicolumn{2}{c}{\textbf{V-Droid}} & \multicolumn{2}{c}{\textbf{M3A}} &
\multicolumn{2}{c}{\textbf{UI-TARS-7B}} &
\multicolumn{2}{c}{\textbf{Avg}} \\
\cmidrule(lr){3-8}
% \cmidrule(lr){3-4}\cmidrule(lr){5-6}\cmidrule(lr){7-8}
& & Acc & F1 & Acc & F1 & Acc & F1 & Acc & F1 \\
\midrule
\multirow{2}{*}{DistRL}
& 1    & 71.3 & 76.9 & 70.4 & 68.5 & 84.2 & 25.0 & 85.3 & 56.8 \\
& Full & 87.0 & 88.9 & 81.7 & 79.6 & 89.5 & 14.3 & 86.1 & 60.9 \\
% \midrule
\multirow{2}{*}{DigiRL}
& 1    & 76.5 & 80.3 & 74.8 & 73.4 & 84.2 & 10.0 & 78.5 & 54.6 \\
& Full & 84.5 & 87.1 & 82.6 & 80.8 & 86.8 & 11.7 & 84.6 & 59.9 \\
% \midrule
\multirow{2}{*}{WebRL}        & 1      & 82.6 & 85.1 & 81.7 & 79.2 & 93.0 & 20.0 & 85.8 & 61.4 \\
        & Full      & 85.2 & 87.0 & 81.7 & 79.2 & 93.9 & 22.2 & 86.9 & 62.8 \\
\multirow{2}{*}{Step-Critic}   & 1      & 85.2 & 87.0 & 81.7 & 79.4 & 93.0 & 20.0 & 86.6 & 61.8 \\
  & Full      & 89.6 & 91.0 & 82.6 & 79.6 & 93.0 & 20.0 & 88.4 & 63.6 \\
% \midrule
\sysname & Proactive & \textbf{93.1} & \textbf{93.4} & \textbf{91.4} & \textbf{88.9} &
\textbf{96.5} & \textbf{66.7} & \textbf{93.7} & \textbf{83.0} \\
\bottomrule
\end{tabular}
\label{tab:policy-agent-results}
\vspace{-1em}
\end{table}

\begin{figure}[!t]
    \centering
    \includegraphics[width=1.0\linewidth]{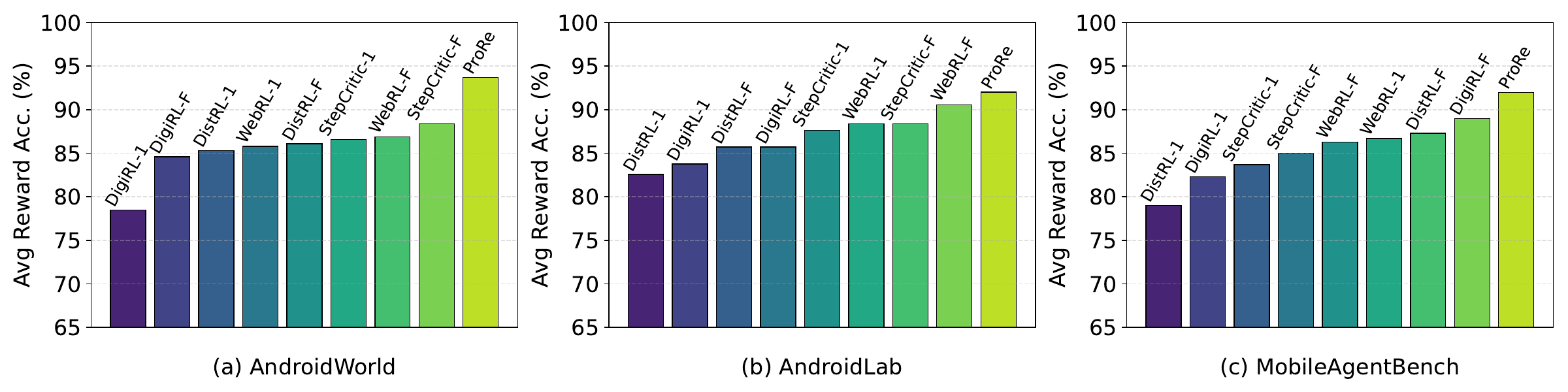}
    \caption{Results Comparison on different benchmarks. The average results on different agents are reported. The \emph{1/F} indicates that the reward uses the last state (\emph{1}) or the full trajectory (\emph{F}).}
    \label{fig:avg_bench}
    \vspace{-1em}
\end{figure}

\textbf{Baselines.}
We compare \sysname with state-of-the-art reward methods, covering both outcome reward models (DigiRL~\cite{bai2024digirl}, DistRL~\cite{wang2024distrl}, WebRL~\cite{qi2024webrl}) and one progress reward model (StepCritic~\cite{lai2025androidgen}). To ensure fairness, we rigorously follow the experimental settings and prompts described in the original papers when reproducing their methods and reporting results. Unless otherwise specified, the results of \sysname are reported \emph{without} test-time scaling when compared against baselines for fairness.

\textbf{Benchmarks.} We conduct comprehensive evaluation of \sysname on over 3k traces collected from three dynamic benchmarks, including AndroidWorld \cite{rawles2024androidworld}, AndroidLab \cite{xu2024androidlab}, and MobileAgentBench \cite{wang2024mobileagentbench}, using state-of-the-art GUI Agents \cite{dai2025advancing, qin2025ui, rawles2024androidworld}. For UI-TARS \cite{qin2025ui}, we adopt UI-TARS-1.5-7B in the naive agentic mode to generate thinking and grounding.

\textbf{Metrics.}
To evaluate the effectiveness of rewards, we report both the \emph{reward accuracy} and \emph{F1 score} by comparing the predicted rewards from baselines and \sysname with the ground-truth rewards provided by the benchmarks. In addition, we measure the \emph{success rate} of policy agents under test-time scaling when guided by these rewards (See \S~\ref{sec:problem_formulation} and Appendix \ref{appendix:tts}).

\textbf{Implementation details.}
We adopt \textit{Gemini-2.5-Pro} as the general-purpose LLM for scheduling state probing tasks and assigning outcome rewards. Unless otherwise specified, V-Droid is employed as the evaluator agent due to its high decision-making quality and prompt execution speed. The step budget for key evidence retrieval is set to be no greater than the length of the policy trajectories.

\subsection{Results Comparison}
    % \dgl{we include qualitative + quantitative results on different benchmarks, policy agents, and evaluator agents.}
% \begin{wrapfigure}{r}{0.3\linewidth}
%     \centering
%     \includegraphics[width=1.0\linewidth]{Figures/Agent-Ability-Reward-Acc.png}
%     \caption{The reward accuracy on agents with different success rate.}
%     \label{fig:agent-ability-rewawrd-acc}
% \end{wrapfigure}

\textbf{Different GUI Agents.} Table~\ref{tab:policy-agent-results} reports the performance of \sysname compared with state-of-the-art baselines. \sysname achieves an average accuracy of 93.7\%, which is 5.3\% higher than the best-performing baselines. Moreover, its F1 score is 19.4\% higher than those of the baselines, demonstrating its robustness in handling diverse mobile GUI agents. We observe that while baselines achieve relatively high accuracy on UI-TARS trajectories, their F1 scores remain low. This discrepancy arises from their inability to correctly judge the success of UI-TARS-1.5-7B trajectories, whose naive agentic mode yields only a 7.9\% success rate. In contrast, \sysname effectively identifies the correct key states through evaluator agents, leading to superior performance on challenging and imbalanced trajectories.

\textbf{Different Benchmarks and Tasks.} As shown in Figure~\ref{fig:avg_bench}, \sysname achieves accuracy improvements of 5.2\%, 1.5\%, and 3.0\% on AndroidWorld, AndroidLab, and MobileAgentBench, respectively. In terms of F1 score, \sysname outperforms the best baseline by 19.4\%, 10.5\%, and 7.5\% on the three benchmarks. These results highlight the robustness of \sysname across diverse applications and task types. While policy agents often struggle to generalize to unseen tasks or applications, \sysname benefits from the execution–probing gap (see \S~\ref{sec:probing}), which makes generalization more attainable. 

\textbf{Extension to PC/Web.} Owing to the decoupled reasoner-actor reward paradigm, \sysname exhibits significant potential for adaptation across diverse environments and tasks, including both PC and web domains. We have further conducted pilot experiments on OSWorld. Specifically, \sysname is evaluated on 100 randomly sampled tasks from OSWorld-PC and all 46 web-based tasks from OSWorld-Chrome. For evaluation, we employ Claude-Sonnet-4.5 as the evaluator agent to perform proactive state probing, and all other settings are consistent with those outlined in \S~\ref{sec:setting}. 

\begin{table}[th]
\centering
\caption{Reward Accuracy on PC and Web tasks.}
\begin{tabular}{lccccc}
\toprule
\textbf{Benchmark} & \textbf{WebRL} & \textbf{DigiRL} & \textbf{DistRL} & \textbf{StepCritic} & \textbf{\sysname} \\
\midrule
OSWorld            & 86.0 & 88.0 & 88.0 & 81.0 & \textbf{92.0} \\
OSWorld-Chrome  & 87.0 & 84.8 & 82.6 & 87.0 & \textbf{93.5} \\
\bottomrule
\end{tabular}
\label{tab:reward_accuracy_pc_web}
\end{table}

As shown in Table \ref{tab:reward_accuracy_pc_web}, across both PC and Web tasks, \sysname achieves the highest reward accuracy, surpassing prior methods by 4.0\% on OSWorld and 6.5\% on OSWorld-Chrome. Existing approaches perform sub-optimally primarily due to incomplete observations of PC/web states and the domain knowledge gap of the reasoners when used as reward models. In contrast, \sysname (i) proactively collects key states/observations by interacting with the PC or website, and (ii) decouples general reasoning from domain-specific GUI judgments through its reasoner–actor paradigm. These results underscore \sysname’s robustness and generalization capability across different platforms and task.

\begin{wrapfigure}{r}{0.7\linewidth}
    \centering
    \includegraphics[width=1.0\linewidth]{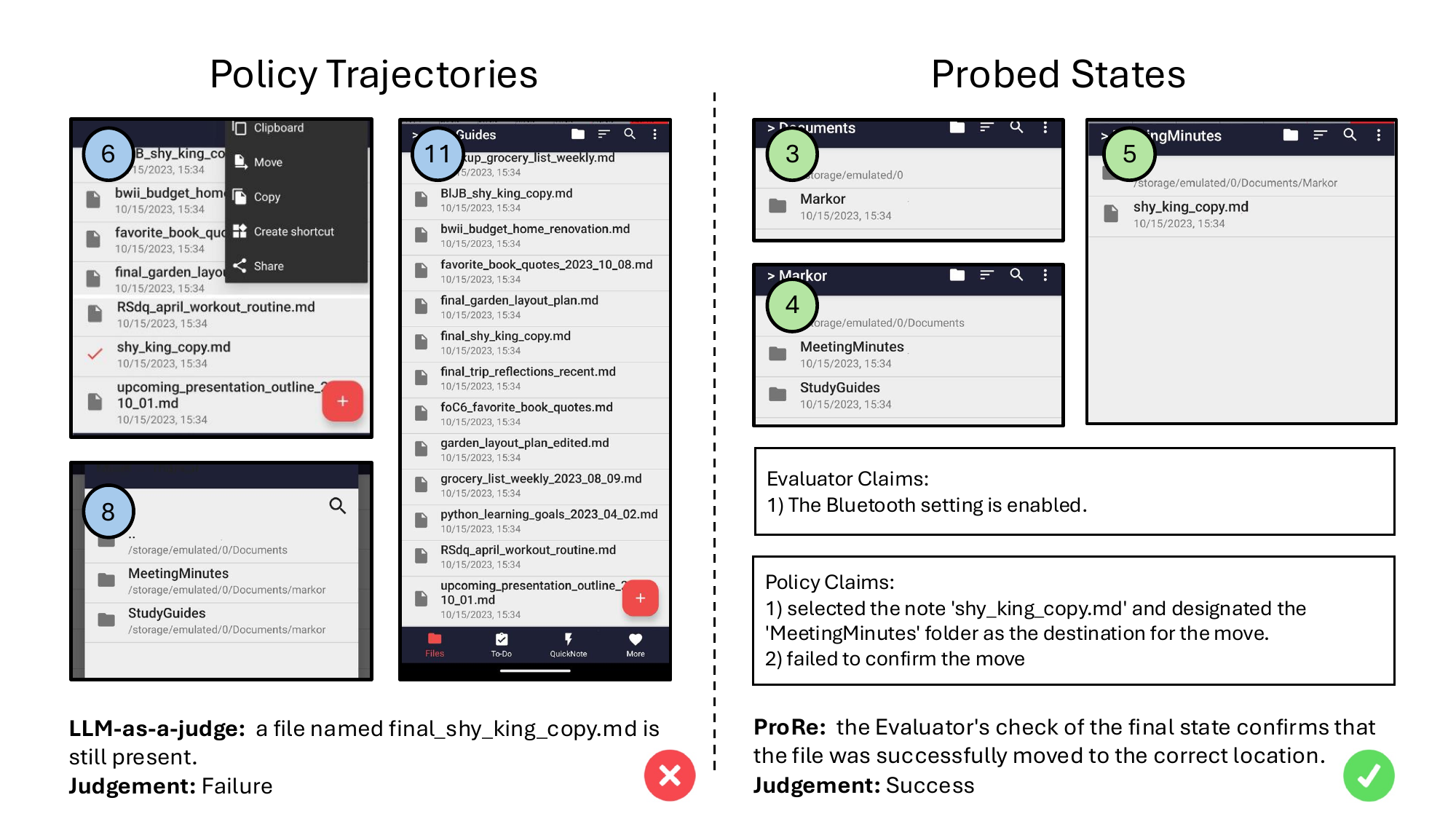}
    \caption{One quantitative example. The task is "\textit{Move the note shy\_king\_copy.md from StudyGuides to MeetingMinutes.}".}
    \label{fig:quantitative} 
    % \vspace{-1.5em}
\end{wrapfigure}

% \textbf{Illustrative Examples.} As shown in Figure~\ref{fig:quantitative}, the policy agent finds the targert file, perform the moving actions, and return to the StudyGuides folder. However, the LLM-as-a-judge is misleaded by the final\_shy\_king\_copy.md due to the excessive information on the page and makes the wrong judgement. In contrast, \sysname probed the states in the target folder with the evaluator agent, which proves the success of the policy agents. This example also present the execution-probing gap: the whole trajectories is more than 10 steps whereas the evidence retrieval only takes 5 steps.

\begin{figure*}[!t]
  \centering
  % Left minipage: subfigures (a) and (b)
  \begin{minipage}[t]{0.66\textwidth}
    \centering
    \includegraphics[width=0.9\textwidth]{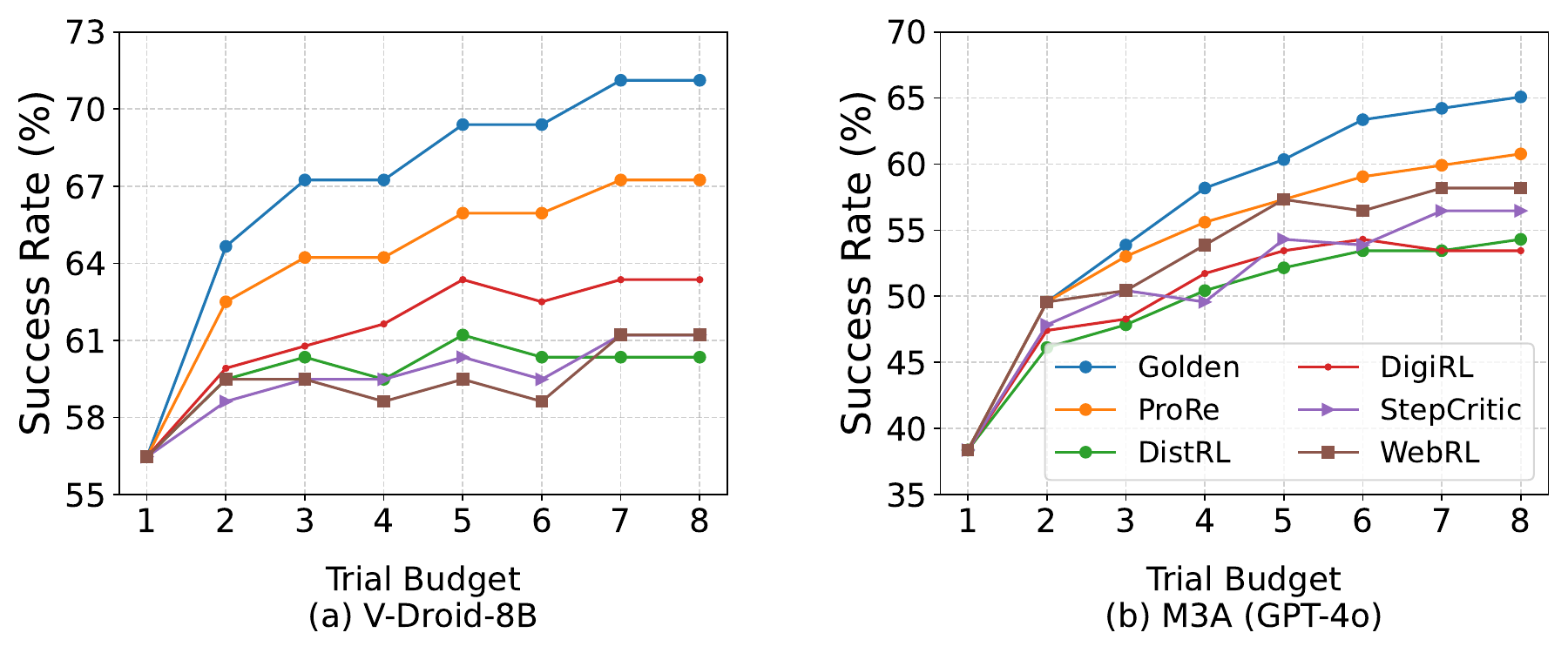}
    \caption{Test-time scaling of policy agents with different rewards. 
  (a) V-Droid-8B, (b) M3A (GPT-4o).}
    \label{fig:test-time-scaling-aw}
  \end{minipage}%
  \hfill
  % Right minipage: independent figure
  \begin{minipage}[t]{0.32\textwidth}
    \centering
    \includegraphics[width=\textwidth]{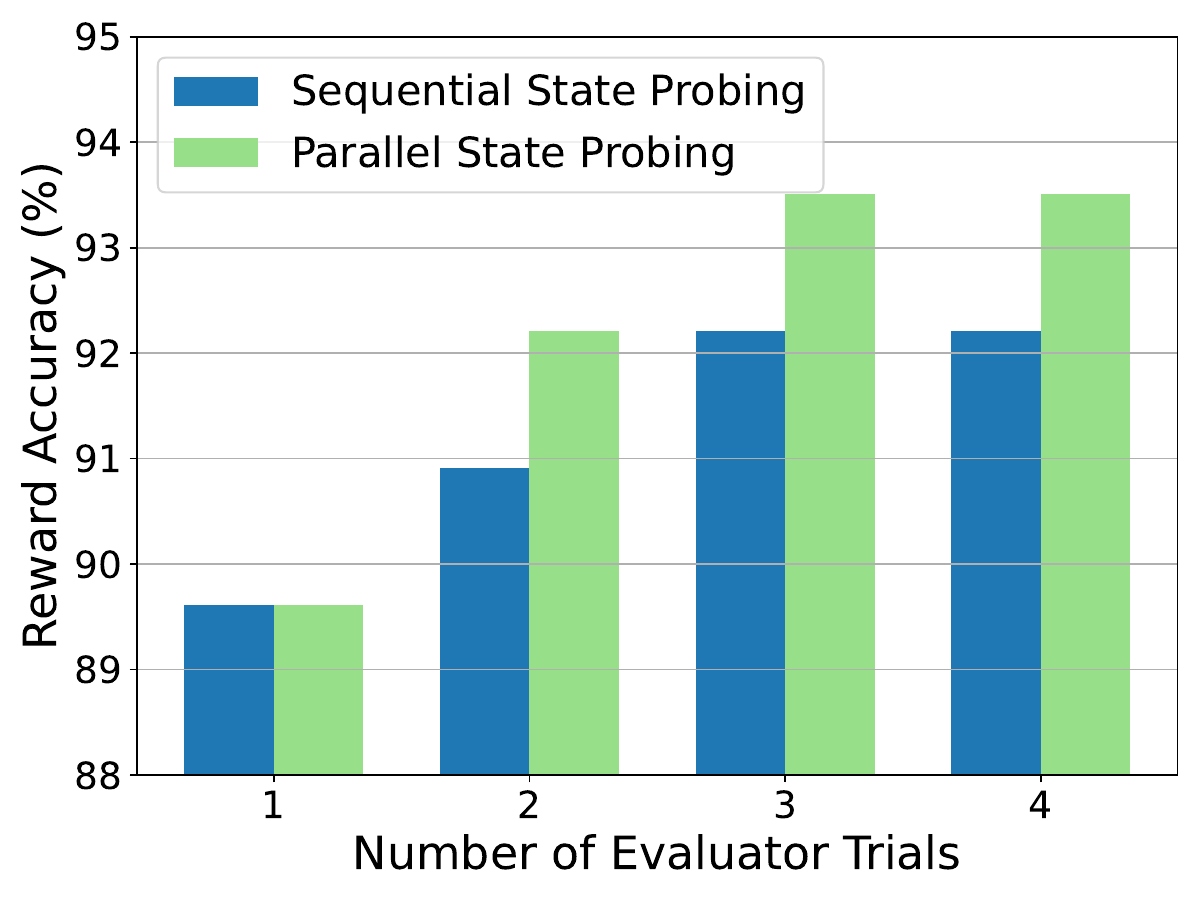}
    \caption{Test-time scaling of \sysname on challenging tasks.}
    \label{fig:tts_prore}
  \end{minipage}
  % \vspace{-0.5em}
\end{figure*}

\textbf{Test-Time Scaling for Policy Agents.} We further evaluate the success rate (SR) of two policy agents, V-Droid~\cite{dai2025advancing} and M3A (GPT-4o)~\cite{rawles2024androidworld}, under different trial budgets. Figure~\ref{fig:test-time-scaling-aw} shows that, guided by \sysname, the success rate (SR) of V-Droid improves from 56.5\% to 67.2\%. Similarly, the SR of M3A (GPT-4o) increases by 22.4\%. The SR gains achieved with \sysname are 3.9\% and 4.3\% higher than those obtained with other reward methods, demonstrating its superiority in guiding policy rollouts. To further validate this, we conduct large-scale simulations based on Lemma~\ref{lem:reward_success}, which highlight the effectiveness of accurate rewards in enhancing test-time scaling of policy agents (see Appendix~\ref{appendix:tts}).

\textbf{Illustrative Examples.} Figure~\ref{fig:quantitative} shows that the policy agent successfully locates the target file, performs the necessary move actions, and returns to the \texttt{StudyGuides} folder. However, the LLM-as-a-judge is misled by the presence of \texttt{final\_shy\_king\_copy.md} due to excessive clutter on the final screen and consequently makes an incorrect judgment. In contrast, \sysname proactively probes the relevant states within the target folder using an evaluator agent, which provides verifiable evidence of the policy agent’s success. This example also highlights the \textit{execution-probing gap}: while the execution trajectory spans 11 steps, the evaluator only requires 5 steps to probe the key states. More examples are provided in Appendix~\ref{appendix:examples}. 

% Figure \ref{fig:test-time-scaling-aw} shows that guided by \sysname, the SR of V-Droid improves from 54.3\% to 67.2\%. The SR of M3A(GPT-4o) improves by 22.4\%. The SR gain with \sysname are \textcolor{red}{4.3\%} and 4.3\% higher compared with other reward methods. The results show that \sysname can consistently provide better reward to guide the rollouts of policy agents. We further conduct large-scale simulation on policy agents based on Lemma \ref{lem:reward_success} to highlight the effectiveness of rewards on the test-time scaling of policy agents in \S~\ref{appendix:tts}. 

\begin{table}[!t]
\centering
% \small
\caption{Ablation study of design components in \sysname.}
\label{tab:ablation-design}
\scalebox{0.85}{
\begin{tabular}{cccc|rrrrr}
\toprule
\multicolumn{4}{c|}{\textbf{Design Components}} & \multicolumn{5}{c}{\textbf{Metrics}} \\
\cmidrule(lr){1-4}\cmidrule(lr){5-9}
\makecell{\textbf{Proactive}\\\textbf{State Probing}} &
\makecell{\textbf{State Probing}\\\textbf{Task Scheduling}} &
\makecell{\textbf{Chain-of-Chaims}} &
\makecell{\textbf{Iterative State }\\\textbf{Probing}} &
\textbf{Acc} & \textbf{TP} & \textbf{TN} & \textbf{FP} & \textbf{FN} \\
\midrule
\xmark & \xmark & \xmark & \xmark & 88.8 & 49.1 & 39.7  & 7.7 & 3.4 \\
\cmark & \xmark & \xmark & \xmark & 89.5 & 45.6 & 43.6 & 2.6 & 7.9 \\
\cmark & \cmark & \xmark & \xmark & 91.4 & 45.7 & 45.7 & \textbf{1.7} & 6.9 \\
\cmark & \cmark & \cmark & \xmark & 93.1 & 49.1 & 44.0 & 3.4 & 3.4 \\
\cmark & \cmark & \cmark & \cmark & \textbf{94.8} & \textbf{50.0} & \textbf{44.8} & 2.6 & \textbf{2.6}\\
\bottomrule
\end{tabular}
}
% \vspace{-1em}
\end{table}

\subsection{Ablation Study}
% We further show the effectiveness of each design component in \sysname with an ablation study. When without state probing tasks scheduling, a simple rule-based probing task is obtained by putting the original instruction $T$ into "What are the key states to verify whether the task \{$G$\} is completed." When without chain-of-claims, all the policy and evaluators' observations are inputted to the reasoner.

% We further validate the effectiveness of each design component in \sysname through ablation studies. Without probing task scheduling, a simple rule-based probing task is constructed by prompting: "What are the key states to verify whether the task {$G$} is completed?" Without chain-of-claims reasoning, the reasoner receives the raw observations from both the policy and evaluator agents.

We further validate the effectiveness of each design component in \sysname through ablation studies. When probing task scheduling is removed, we replace it with a simple rule-based strategy by prompting: \textit{"What are the key states to verify whether the task \{$G$\} is completed?"} Without chain-of-claims reasoning, the reasoner directly receives the raw observations from both the policy and evaluator agents, without structured analysis.

Table~\ref{tab:ablation-design} demonstrates the contribution of each design component in \sysname. Without explicit guidance from the reasoner, evaluator agents navigate with probing tasks generated with simple rules, which provides marginal improvement. When the reasoner schedules probing tasks for evaluator agents, the accuracy increases substantially to 91.4\%, underscoring the effectiveness of separating reasoning and planning (by the reasoner) from execution (by the evaluators). In addition, incorporating chain-of-claims reasoning further improves accuracy by 1.7\%, highlighting the importance of summarizing low-level GUI details from trajectories and analyzing the relationships between policy and evaluator claims. Finally, iterative state probing in \sysname boosts performance to 94.8\%, as additional probing and refinement yields more complete observations of key states. Besides, without the claim filter, we observe a 1.7\% reward accuracy drop on AndroidWorld benchmark, underscoring the necessity of eliminating irrelevant or misleading claims prior to the chain-of-claims.

Figure~\ref{fig:tts_prore} further illustrates the benefits of parallel and iterative state probing, especially on more challenging tasks. Notably, parallel state probing yields larger performance gains compared with iterative probing. A possible explanation is that the reasoner, lacking domain-specific GUI knowledge, is less effective at leveraging the intermediate observations and action histories provided by the evaluator agents to refine subsequent probing tasks.

\begin{table*}[!t]
\centering
\begin{minipage}[t]{0.42\textwidth}
    \centering
    \caption{Reward accuracy of \sysname with different reasoners.}
    \label{tab:reasoners_comparison}
    \begin{tabular}{lcc}
    \toprule
    \textbf{Reasoners} & \textbf{Acc} & \textbf{F1} \\
    \midrule
    Gemini-2.5-Pro & 93.1 & 93.4 \\
    Gemini-2.5-Flash & 87.7 & 87.7 \\\midrule
    % Gemini-2.0-Flash & 81.6 & 82.9 \\
    GPT-5 & 86.2 & 86.0 \\
    GPT-4o & 85.0 & 86.0 \\
    \bottomrule
    \end{tabular}
\end{minipage}
\hfill
\begin{minipage}[t]{0.54\textwidth}
    \centering
    \caption{Reward accuracy of \sysname with different evaluator agents.}
    \label{tab:different_evaluators_comparison}
    \begin{tabular}{l c c c}
    \toprule
    \textbf{Evaluator Agent} & \textbf{Policy SR} & \textbf{Acc} & \textbf{F1} \\
    \midrule
    V-Droid-8B       & 59.5 & 93.1 & 93.4 \\
    UI-TARS-72B      & 35.7 & 86.2 & 87.3 \\
    Qwen3-VL-4B      & 45.3 & 85.7 & 86.0 
    \\ \midrule
    M3A (GPT-5)      & 56.9 & 90.5 & 91.7 \\
    M3A (GPT-4o)     & 41.3 & 88.3 & 87.4 \\
    \bottomrule
    \end{tabular}
\end{minipage}
\vspace{-1.5em}
\end{table*}

\textbf{Different Reasoners.} 
We further vary the reasoners in \sysname. Table~\ref{tab:reasoners_comparison} shows that reasoners equipped with built-in chain-of-thought capabilities are more effective at analyzing the relationships between policy and evaluator claims, leading to higher reward accuracy and F1 scores.

% \dgl{move this to the motivation section? But we have the score on the llm. This seems to be controdictory to the insight we want to deliver.}

\textbf{Different Evaluator Agents.}
We further investigate the impact of evaluator agent capability. \emph{Policy SR} denotes the task success rate of an agent when deployed as a policy. As shown in Table~\ref{tab:different_evaluators_comparison}, stronger evaluator agents are able to probe key states from the environment more effectively, thereby achieving higher reward accuracy. Moreover, fine-tuned small GUI agents can outperform large generalist agentic systems when used as evaluators, owing to their domain-specific GUI knowledge. We also notice that Qwen3-VL-4B yields notably lower reward accuracy compared with larger models. We hypothesize that this drop stems from the smaller models’ weaker ability to generalize to unseen probing tasks and identify key observations even with high-quality probing instructions. 

% \begin{table}[!t]
% \centering
% \caption{Reward accuracy of \sysname with different evaluator agents.}
% \label{tab:different_evaluators_comparison}
% % \renewcommand{\arraystretch}{1.2}
% \begin{tabular}{l l c c c c c c}
% \toprule
% \textbf{Evaluator Type} & \textbf{Evaluator Agent} & \textbf{Policy SR} & \textbf{Acc} & \textbf{TP} & \textbf{TN} & \textbf{FP} & \textbf{FN} \\
% \midrule
% \multirow{2}{*}{Agentic Models}
%     & V-Droid-8B & 59.5 & 92.1 & 47.4 & 44.7 & 1.8 & 6.1 \\
%     & UI-TARS-72B &   &   &  &   &   &   \\ \midrule
% \multirow{2}{*}{Agentic Systems}    & M3A(GPT-5)     & 59.5 &  87.9 & 55.2 & 32.8 & 7.8 & 4.3 \\
%     & M3A(GPT-4o) & 41.3 & 88.3 & 49.5 & 39.0 & 6.5 & 5.2 \\

% % \multirow{2}{*}{AndroidLab \dgl{tbd}}
% %     & V-Droid &  & &  & &  & \\
% %     & M3A(GPT-4o) & & &  & &  & \\
% % \midrule
% % \multirow{2}{*}{MobileAgentBench}
% %     & V-Droid & 49.0 & 85.0 & 40.0 & 45.0 & 10.0 & 5.0 \\
% %     & M3A(GPT-4o)     & 63.0 & 91.0 & 40.0 & 51.0 & 5.0  & 4.0 \\
% \bottomrule
% \end{tabular}
% \end{table}

% \subsection{Simulation Analysis for Reward System Guided Test-Time Scaling}

% \dgl{quality analysis on evaluation tasks?}

% \dgl{success rate on the execution of evaluation tasks? The link between evidence obtained and the correct judgement?}

\textbf{Overhead Analysis.} The total cost of \sysname is approximately \$0.06 per agent task. Among the components, state probing task scheduling and chain-of-claims contribute about \$0.013 and \$0.050, respectively. Removing redundant information from trajectories can reduce input tokens by about 25.9\% without degrading performance. Overall, \sysname remains significantly more cost-efficient and scalable compared to hiring human annotators. We further provide a detailed per-task cost comparison and long-term cost estimation between \sysname and the baselines in Appendix \ref{appendix:cost_comparison}.

\section{Discussions}
\textbf{Online RL with \sysname}. Prior work has shown that online reinforcement learning (RL) can achieve substantially better performance when guided by accurate reward signals \cite{qi2024webrl, wang2024distrl}. After policy agents execute actions in an online RL setting, \sysname can be seamlessly integrated to provide more precise reward assignments with only moderate overhead. Nevertheless, we defer a full exploration of online RL with \sysname to future work.

% Online RL is widely observed to be more effective with better rewards \cite{qi2024webrl, wang2024distrl}. After the execution of policy agents during online RL, \sysname could take over to assign significant better reward to the agent with acceptable additional overhead. We leave the Online RL as future work due to limited access to GPU and emulator clusters.

\textbf{Co-evolution of Policy and Evaluator Agents.} In \sysname, the policy agent and the evaluator agent can be instantiated from the same underlying model, creating a unique opportunity for co-evolution. Stronger evaluator agents enhance reward accuracy, which in turn improves the policy agent’s success rate. As the policy agent becomes stronger through test-time scaling or training, it enables the evaluator to achieve higher success on state probing tasks and further improve reward accuracy. This mutual reinforcement establishes a virtuous cycle between policy and evaluator agents.

% \textbf{Co-evolving between Policy and Evaluator Agents.} In \sysname, the evaluator agent and the policy agent can be instantiated as the same underlying model, enabling a unique co-evolution opportunity. A stronger evaluator agent leads to better reward accuracy and better success rate of policy agent. As the policy agent improves through test-time scaling or training, the evaluator agents’ success rate on state probing, and thus the reward accuracy, also improves over time. 

% \textbf{Applicability to Other Domains.} \sysname leverages the insight that state probing tasks are often simpler than the original tasks that require multi-step precise executions. Although our current evaluation focuses on mobile devices, the framework is readily extensible to other domains, such as computer use, web use, and embodied intelligence, where we plan to explore in the future.

% \textbf{Applicability to Other Domains.} \sysname is built on the oppournities that state probing tasks could be easier compared with the original tasks involving multi-steps exact executions. We argue that while \sysname is currently implemented and tested on mobilephone, we argue that it can be easily extended to other scenarios, such as computer-use, web-use, or embodied intelligence. In those cases, the reward system can scale first with \sysname to guide the improvement of policy agents.
\section{Conclusions}
\label{sec:conclusions}
Unlike existing trajectory-based LLM-as-a-Judge approaches, \sysname introduces a proactive reward system for GUI agents that integrates a general-purpose reasoner with domain-specific evaluator agents. The evaluator agents proactively probe key states based on probing tasks scheduled by the reasoner, while the reasoner make final judgments based on the chain-of-claims from the evaluator agents. Extensive experiments across diverse tasks, applications, and agents demonstrate the effectiveness of \sysname, as well as its effectiveness in guiding the test-time scaling of policy agents.

% \begin{acks}
\section*{Acknowledgements}
This research is partially supported by Singapore Ministry of Education under its AcRF Tier 1 grant RT14/22, the Global STEM Professorship Scheme of Hong Kong, the HKUST start up grant, and the Research Grants Council (RGC) General Research Fund (GRF) 16210425.
% \end{acks}

\bibliography{iclr2026_conference}
\bibliographystyle{iclr2026_conference}

\appendix
\section{The Use of Large Language Models (LLMs)}
This work studies rewarding LLM-based GUI agents with a proactive reward system. LLMs were involved in three aspects of our research: (i) serving as the backbone for the GUI agent and baseline implementations, (ii) supporting framework design, and (iii) assisting with polishing the writing of the manuscript.  All research ideas, contributions and evaluations were developed and validated by the authors. No LLM is considered an author.  

\setcounter{lemma}{0}
\section{Test-time Scaling for Policy Agents} \label{appendix:tts}
\subsection{Proof of Lemma~\ref{lem:reward_success}}
% We define the test-time scaling of policy agent as: the policy agent rolled out for a task with trial budge N. The reward methods would evaluate the trial after the policy finishes. Once the reward methods determines the policy agent success or the policy agent meet the trial budget, the policy agent stops and submit the corresponding trial for the final check. 

We define \emph{test-time scaling} as the procedure where a policy agent is rolled out on a given task with a maximum trial budget $N$. After each trial, a reward method evaluates the trajectory produced by the policy agent. If the reward method determines the trajectory to be successful, the process terminates early and the corresponding trial is submitted as the final output. Otherwise, the policy agent continues to the next trial, until either a successful trajectory is identified or the trial budget $N$ is exhausted. If no positive judgment is given within $N$ trials, the final trial is submitted as the output.

\begin{lemma}[Restated]
Let the success rate of the policy agent be $p_a$ and the reward accuracy be $p_c$. 
Then, under test-time scaling with trial budget $N$, the final success rate $P_{final}$ satisfies
\[
P_{final} = \frac{p_ap_c}{q}\big[1-(1-q)^N\big] + p_a(1-q)^N, 
\quad \text{where } q = p_ap_c+(1-p_a)(1-p_c).
\]
In particular, given $p_a > 0$, $P_{final}$ monotonically increases with respect to $p_c$ whenever $p_c > 0.5$.
\end{lemma}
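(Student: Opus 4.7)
The plan is to (i) decompose $P_{final}$ by enumerating when the rollout halts and summing a geometric series; (ii) differentiate the resulting closed form with respect to $p_c$; and (iii) reduce the monotonicity claim to a polynomial inequality.

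First, I would model each of the $N$ trials as an independent paired draw, with joint probabilities
\begin{equation*}
\Pr(\text{success},\text{positive}) = p_a p_c,\qquad \Pr(\text{failure},\text{positive}) = (1-p_a)(1-p_c),
\end{equation*}
so that the marginal probability of a positive judgment on any trial is exactly $q = p_a p_c + (1-p_a)(1-p_c)$. Under the rollout rule, the process halts at trial $k$ with a positive judgment precisely when the first $k-1$ judgments are negative and the $k$-th is positive, contributing $(1-q)^{k-1}\,p_a p_c$ to the joint event ``halt at $k$ with an actually successful trajectory.'' Summing the resulting geometric series over $k = 1, \dots, N$ yields $\tfrac{p_a p_c}{q}\bigl[1-(1-q)^N\bigr]$. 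The complementary exhaustion event has probability $(1-q)^N$ and contributes $p_a(1-q)^N$ by the exhaustion convention in the setup. Adding the two pieces gives the stated closed form.

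For the monotonicity claim, I would differentiate term-by-term using $\tfrac{dq}{dp_c} = 2p_a - 1$. The key algebraic simplification is that the numerator of $\tfrac{d}{dp_c}\!\left[\tfrac{p_a p_c}{q}\right]$ collapses to $p_a(1-p_a)$, because $p_a q - p_a p_c(2p_a-1) = p_a\bigl[p_a p_c + (1-p_a)(1-p_c) - (2p_a-1)p_c\bigr]$ telescopes to $p_a(1-p_a)$. Collecting the remaining factors one obtains
\begin{equation*}
\frac{dP_{final}}{dp_c} = \frac{p_a(1-p_a)}{q^2}\Bigl[\,1 - (1-q)^N + N(2p_a-1)(2p_c-1)\,q\,(1-q)^{N-1}\,\Bigr].
\end{equation*}

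It remains to show the bracket is nonnegative whenever $p_c > \tfrac{1}{2}$ and $p_a \in (0,1)$. Writing $x = 1-q$ and $\beta = (2p_a-1)(2p_c-1) \in [-1,1]$, the bracket becomes $1-x^N + N\beta(1-x)x^{N-1}$. The case $\beta \ge 0$ (i.e., $p_a \ge \tfrac{1}{2}$) is immediate since both summands are nonnegative. The main obstacle is the regime $p_a < \tfrac{1}{2}$, where $\beta<0$ and the second summand subtracts from the first: here I would invoke the elementary identity $1-x^N = (1-x)\sum_{k=0}^{N-1} x^k \ge N(1-x)x^{N-1}$ (since $x^k \ge x^{N-1}$ for $k \le N-1$ and $x \in [0,1]$) together with the bound $|\beta| \le 1$ to conclude $1-x^N + N\beta(1-x)x^{N-1} \ge N(1-x)x^{N-1}\bigl(1-|\beta|\bigr) \ge 0$. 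This establishes the monotonicity claim; the rest of the argument is pure bookkeeping.
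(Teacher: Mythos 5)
Your proof is correct and follows essentially the same route as the paper: the same decomposition into ``first positive judgment'' versus ``budget exhausted'' (you sum the geometric series over the halting index where the paper conditions on the event of at least one positive and multiplies by $\Pr(\text{success}\mid\text{positive})=p_ap_c/q$, which is equivalent), yielding the identical closed form. For the monotonicity claim the paper merely asserts $\partial P_{final}/\partial p_c>0$ with ``details omitted for brevity,'' whereas you actually carry out the differentiation --- the collapse of the numerator to $p_a(1-p_a)$ and the bound $1-x^N \ge N(1-x)x^{N-1}$ combined with $|\beta|\le 1$ both check out --- so on that point your write-up is strictly more complete than the paper's.
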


% Run $N$ trials and stop at the first positively judged trial if one exists; otherwise (i.e., if all $N$ judgments are non-positive), return the last trial.

\begin{proof}
In each trial, the policy agent succeeds with probability $p_a$. The reward model outputs a positive judgment with probability
\[
q \;=\; \Pr(\text{TP})+\Pr(\text{FP}) \;=\; p_ap_c + (1-p_a)(1-p_c).
\]
where TP and FP refers to true positives and false positives. 

\emph{(i) Formula.} The probability that at least one positive reward appears is $1-(1-q)^N$. Given a positive reward, the probability that the trajectory corresponds to a truly successful one is
\[
\Pr(\text{success} \mid \text{positive}) \;=\; \frac{\Pr(\text{TP})}{\Pr(\text{positive})} \;=\; \frac{p_a p_c}{q}.
\]
Hence when there is at least one positive reward in the N trials, the success probability is $\frac{p_a p_c}{q}\,[1-(1-q)^N]$.

If no positive reward appears, the last submitted trial is an unfiltered trial whose success probability is $p_a$.
Summing the two cases yields
\[
P_{final} \;=\; \frac{p_ap_c}{q}\bigl[1-(1-q)^N\bigr] + p_a(1-q)^N.
\]

\emph{(ii) Monotonicity in $p_c$ for $p_c>\tfrac12$.}
When $p_c>\tfrac12$, a positive reward is more likely to be a true success than a failure.
Increasing $p_c$ simultaneously increases the true positive rate among judged positives and decreases false positives, thereby making the first positive more likely to be a true success. Formally, differentiating $P_{final}$ with respect to $p_c$ shows $\partial P_{final}/\partial p_c>0$ whenever $p_a>0$ and $p_c>\tfrac12$ (details omitted for brevity). 
Thus $P_{final}$ monotonically increases in $p_c$ on $(\tfrac12,1]$.
\end{proof}

\begin{figure}[!t]
  \centering
  \begin{subfigure}[b]{0.30\textwidth}
    \centering
    \includegraphics[width=\textwidth]{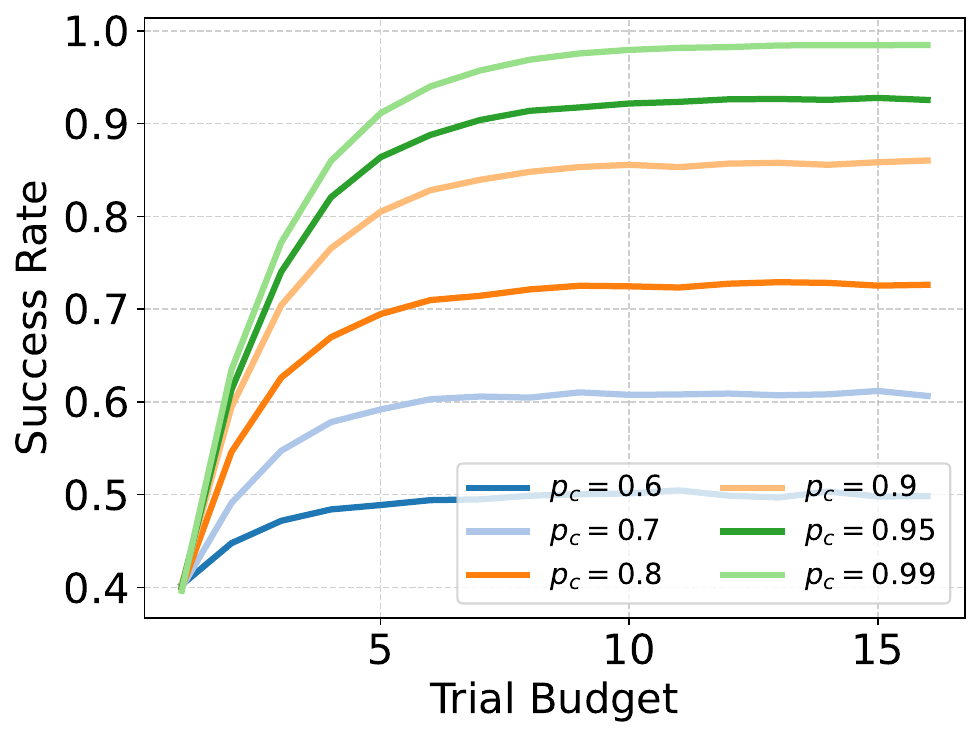}
    \caption{}
    \label{fig:tts_simu}
  \end{subfigure}
  % \hfill
  \hspace{1em}
  \begin{subfigure}[b]{0.30\textwidth}
    \centering
    \includegraphics[width=\textwidth]{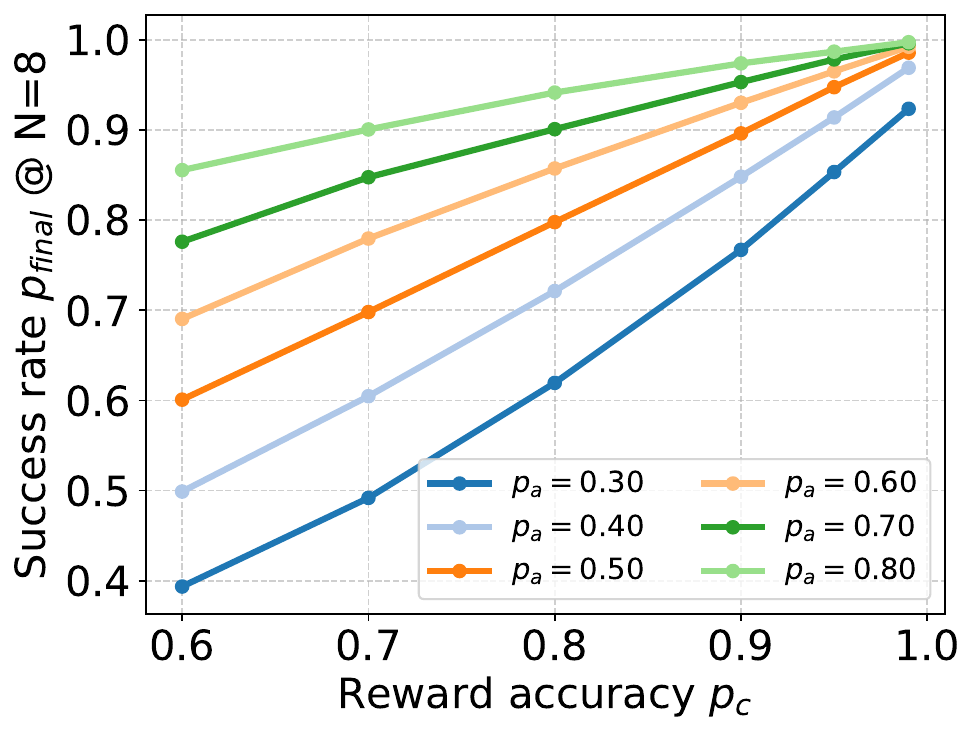}
    \caption{}
    \label{fig:tts_final}
  \end{subfigure}
  % \hfill
  \hspace{1em}
  \begin{subfigure}[b]{0.30\textwidth}
    \centering
    \includegraphics[width=\textwidth]{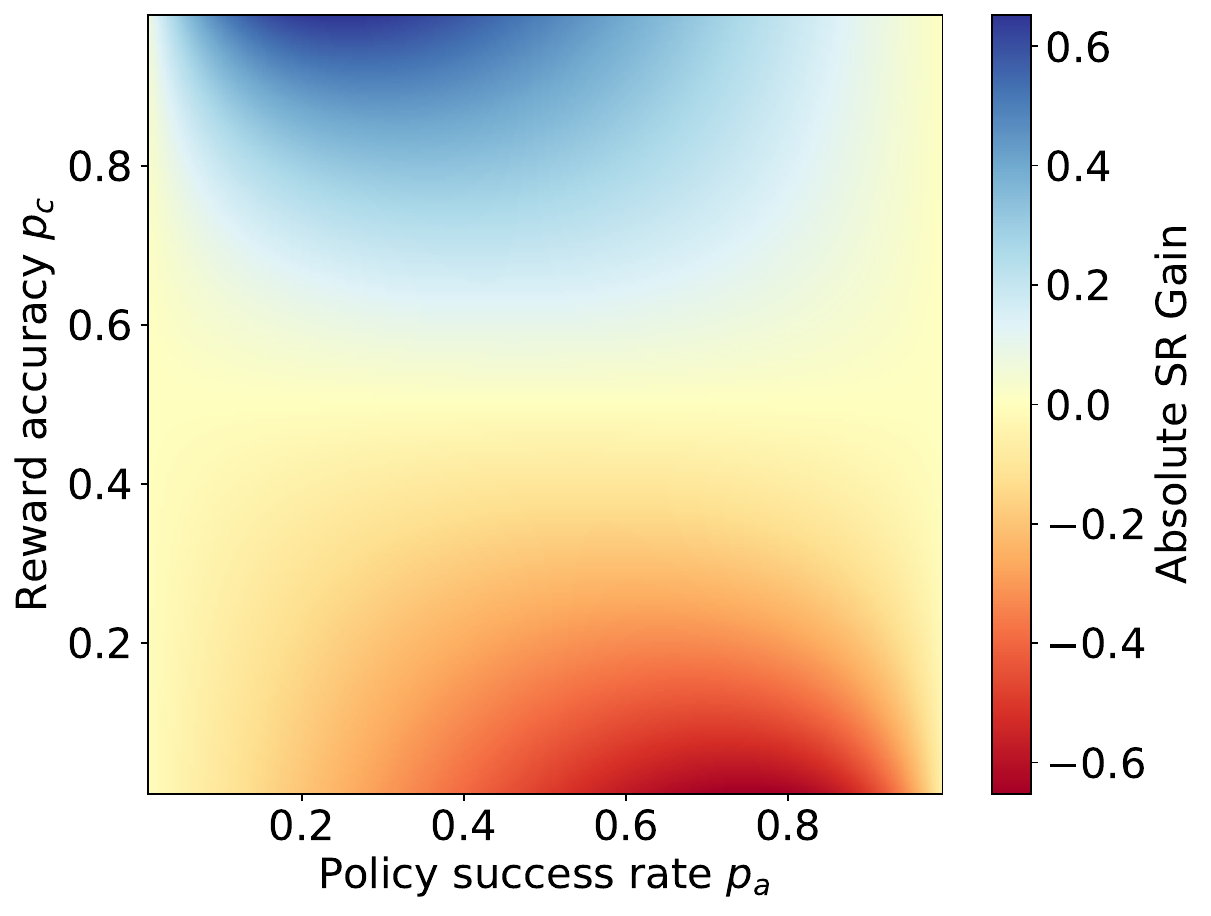}
    \caption{}
    \label{fig:tts_gain}
  \end{subfigure}

  \caption{
  Simulation for test-time scaling of Policy Agents. (a) The success rate of a policy agent increases steadily as the trial budget grows. (b) The final success rate $p_{final}$ monotonically improves with higher reward accuracy across different policy agents, highlighting the importance of reliable evaluation. (c) Reward accuracy contributes the largest absolute success rate (SR) gain for policy agents with moderate baseline SR (0.2–0.6).}
  \label{fig:tts simu}
\end{figure}

\textbf{Implications of Test-time Scaling of GUI Agents.} 
Test-time scaling of GUI agents is closely connected to both the exploration capabilities of GUI agents on applications and tasks, as well as their training efficiency. 
On complex out-of-domain tasks, a GUI agent may actively explore applications, accumulate experience, and iteratively refine its trajectories to achieve the task instructions. 
During training, more effective rollouts enabled by test-time scaling can generate larger-scale datasets with higher quality, while keeping the overall budget fixed.

\subsection{Large-scale Simulation Analysis.} To further study the impact of the reward accuracy on the performance of policy agent during test-time scaling, we conduct large scale simulation for different $P_a$ and $P_c$ based on Lemma \ref{lem:reward_success}. The simulation is repeated for 50K times and the average results are reported in Figure. \ref{fig:tts simu}. 

As shown in Figure~\ref{fig:tts simu}, when the reward accuracy is greater than 50\%, a higher reward accuracy consistently leads to a higher success rate of the policy agent, since additional rollouts increase the likelihood of discovering a successful trajectory. We also observe that in most cases $p_{final}$, indicating the upper capability boundary of the policy agent and underscoring the importance of continuously improving the decision-making ability of GUI agents. The results on realistic tasks and applications in Figure~\ref{fig:test-time-scaling-aw} exhibit a similar trend, further highlighting the importance of reward quality in boosting performance.

\section{State Probing Tasks Examples} \label{appendix:probing_tasks_example}
We further provide illustrative examples of the scheduled state probing tasks in Table~\ref{tab:probing_tasks_example}. Compared to the original tasks, these probing tasks are generally easier (shown in Table~\ref{tab:dualob_subcols}), as they only require the evaluator agent to navigate to the relevant UI page without performing content editing or modification. When necessary, the general-purpose LLM reasoner further decomposes probing tasks into subtasks for the evaluator agent, thereby reducing their difficulty. Moreover, the evaluator agents execute based on the prior interactions of the policy agent, which helps to further mitigate navigation challenges.

\begin{longtable}{p{0.47\linewidth} p{0.47\linewidth}}
\caption{Examples for the original tasks and the corresponding probing tasks} \label{tab:probing_tasks_example} \\
\toprule
\textbf{Original Task} & \textbf{Generated Probing Tasks} \\
\midrule
\endfirsthead
Open the file task.html in Downloads in the file manager; when prompted open it with Chrome. Then click the button 5 times, remember the numbers displayed, and enter their product in the form. & What is the value in the input field on the task.html page in Chrome? \\
\midrule
Take one photo. & Find the most recently taken photo in the gallery. \\
\midrule
Create a timer with 0 hours, 16 minutes, and 35 seconds. Do not start the timer. & Confirm the timer is set to 16 minutes and 35 seconds and is not running. \\
\midrule
Create a new contact for Hugo Pereira. Their number is +13920741751. & What is the phone number for the contact Hugo Pereira? \\
\midrule
Add the expenses from expenses.jpg in Simple Gallery Pro to pro expense. & Show the expenses from expenses.jpg in the pro expense app. \\
\midrule
Go through the transactions in my\_expenses.txt in Markor. Log the reimbursable transactions in the pro expense. & What are the logged transactions in the pro expense file in Markor? \\
\midrule
Delete all but one of any expenses in pro expense that are exact duplicates, ensuring at least one instance of each unique expense remains. & Verify the pro expense list contains no duplicate entries. \\
\midrule
Delete the following expenses from pro expense: Rental Income. & Find the ``Rental Income'' expense in the pro expense app. \\
\midrule
Delete the file q2a8\_fancy\_banana.mp3 from the Android filesystem located in the Notifications folder within the sdk\_gphone\_x86\_64 storage area. & Check the Notifications folder for the file q2a8\_fancy\_banana.mp3. \\
\midrule
Move the file holiday\_photos.jpg from Podcasts within the sdk\_gphone\_x86\_64 storage area to the DCIM within the same sdk\_gphone\_x86\_64 storage area in the Android filesystem. & Check if holiday\_photos.jpg is in the DCIM folder and not in the Podcasts folder. \\
\midrule
Update the Markor note 2023\_08\_10\_neat\_wolf.txt by adding the following text, along with a new blank line before the existing content: ``RnI8sP34yDzJQbvkfplR'', and rename it to busy\_wolf\_2023\_07\_23.txt. & In Markor, open the note busy\_wolf\_2023\_07\_23.txt and show its content. \\
\midrule
Create a new note in Markor named 2023\_01\_26\_wise\_yacht.md with the following text: Ignorance is bliss. & In Markor, what is the content of the note 2023\_01\_26\_wise\_yacht.md? \\
\midrule
Merge the contents of Markor notes tough\_frog\_2023\_08\_05.txt, proud\_cat\_edited.txt and 2023\_08\_21\_friendly\_koala.md (in the same order) into a new Markor note named mIObBbo4 and save it. Add a new line between the content of each note. & What are the contents of the Markor note mIObBbo4? \\
\midrule
In Markor, move the note shy\_king\_copy.md from StudyGuides to MeetingMinutes. & Find the note shy\_king\_copy.md in the MeetingMinutes folder. \\
\midrule
Is the note titled `To-Do List' in the Joplin app marked as a todo item? Respond with either 'True' if it is a todo or 'False' if not. & Check the to-do status of the `To-Do List' note in Joplin. \\
\midrule
What quantity of spirulina do I need for the recipe `Chicken Alfredo' in the Joplin app? Express your answer in the format amount unit where both the amount and unit exactly match the format in the recipe. & What is the quantity of spirulina in the Joplin recipe `Chicken Alfredo'? \\
\midrule
Open the contacts app. Clear any pop-ups that may appear by granting all permissions that are required. & Verify the contacts app is open and no permission pop-ups are visible. \\
\midrule
Add a favorite location marker for 47.1303814, 9.5930117 in the OsmAnd maps app. & Find the favorite location marker for 47.1303814, 9.5930117 in My Places. \\
\midrule
Add a location marker for Planken, Liechtenstein in the OsmAnd maps app. & Find the map marker for Planken, Liechtenstein. \\
\midrule
Add the recipes from recipes.jpg in Simple Gallery Pro to the Broccoli recipe app. & Confirm recipes from recipes.jpg are in the Broccoli app. \\
\bottomrule
\end{longtable}

\section{Additional Examples}
\label{appendix:examples}
We further include additional illustrative examples in Figure~\ref{fig:add_example1}, Figure~\ref{fig:add_example2}, Figure~\ref{fig:add_example3}, and Figure~\ref{fig:add_example4} to demonstrate the limitations of LLM-as-a-judge and the effectiveness of \sysname.

The primary limitations of trajectory-based LLM-as-a-judge approaches for GUI agents are: i) Incomplete state observations of the environment, which hinder accurate reasoning and judgment; and ii) Lack of GUI domain expertise, making it difficult for LLMs to interpret complex UI-related details and the UI logic.

% \textbf{Incomplete State Observations.} Figure~\ref{fig:add_example1}, Figure~\ref{fig:add_example3} and Figure~\ref{fig:add_example4} shows that the policy agents only observe partially to the limited elements presented on screen, which miss many important contents that can help decides the success or failure of the policy agent. For example, in Figure~\ref{fig:add_example3}, the policy observations only include a part of the event names, which leads to incorrect answers from the policy agents. 

\textbf{Incomplete State Observations.} Figure~\ref{fig:add_example1}, Figure~\ref{fig:add_example3}, and Figure~\ref{fig:add_example4} illustrate that the policy agents only observe a partial view of the environment due to the limited UI elements visible on screen and the APPs design. As a result, the trajectories miss critical information necessary for determining task success or failure. For example, in Figure~\ref{fig:add_example3}, the policy agent's observation includes only part of the event names, which leads to an incorrect answer.

\textbf{Lack of GUI Domain Expertise.} Figure~\ref{fig:add_example2} shows that the policy agent chooses to turn on Bluetooth by clicking the \texttt{Pair new device} button. However, the reasoner lacks the GUI-specific knowledge to recognize that this action implicitly triggers the activation of Bluetooth, and therefore fails to make the correct judgment.

% \textbf{Lack of GUI Domain Expertise.} Figure~\ref{fig:add_example2} shows that the policy agent choose to turn bluetooth on via clicking the button "Pair new device". However, the reasoner lacks the knowledge that this lead to the open of bluetooth. 

% Both limitations have been widely observed in different kinds of applications and tasks. To handle both problems, \sysname proactive probe states with the collaboration between the general reasoner and the domain-specific evaluator agents. Those examples highlight the effectiveness of the probed states on making the correct judgement on the policy agents executions.

Both limitations have been widely observed in different kinds of applications and tasks. To handle both problems, \sysname proactive probe states with the collaboration between the general reasoner and the domain-specific evaluator agents. Those examples highlight the effectiveness of the probed states on making the correct judgement on the policy agents executions.

\begin{figure}[!b]
    \centering
    \includegraphics[width=0.7\linewidth]{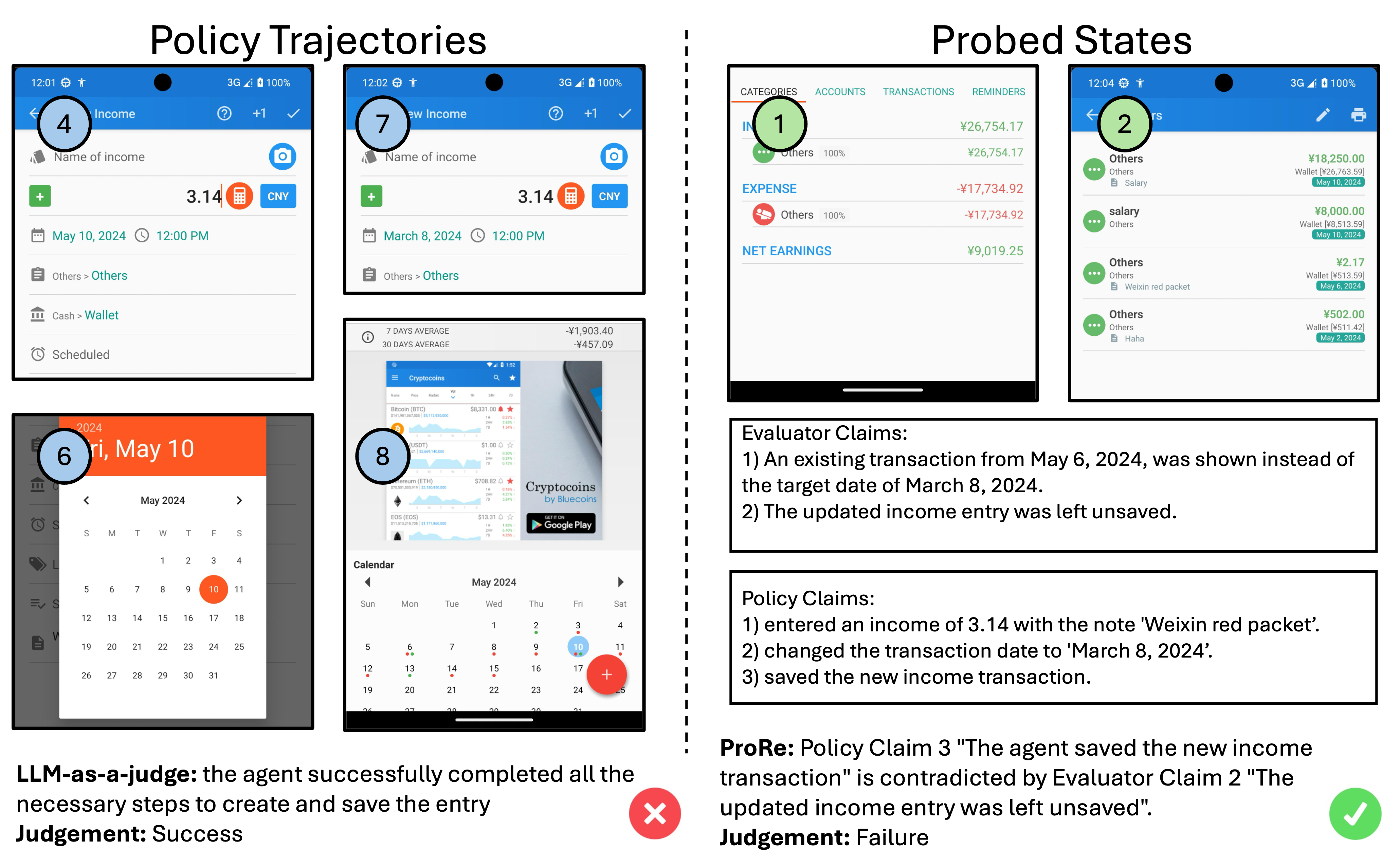}
    \caption{Additional examples. The task is "\textit{Switch the May 13, 2024, transaction from 'expense' to 'income' and add 'Gift' as the note in Bluecoins.}"}
    \label{fig:add_example1}
\end{figure}

\begin{figure}
    \centering
    \includegraphics[width=0.7\linewidth]{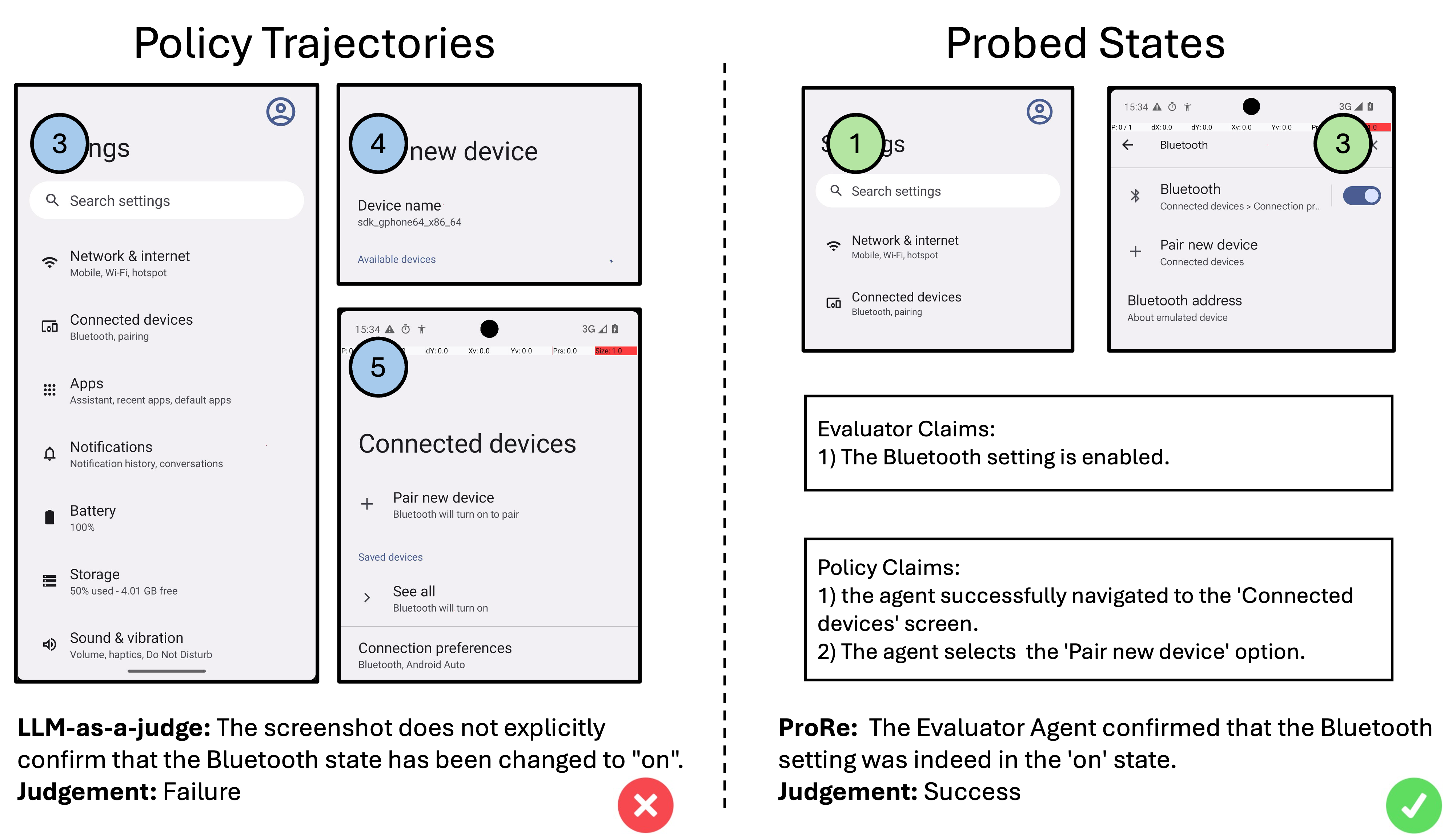}
    \caption{Additional examples. The task is "\textit{Turn bluetooth on.}"}
    \label{fig:add_example2}
\end{figure}

\begin{figure}
    \centering
    \includegraphics[width=0.7\linewidth]{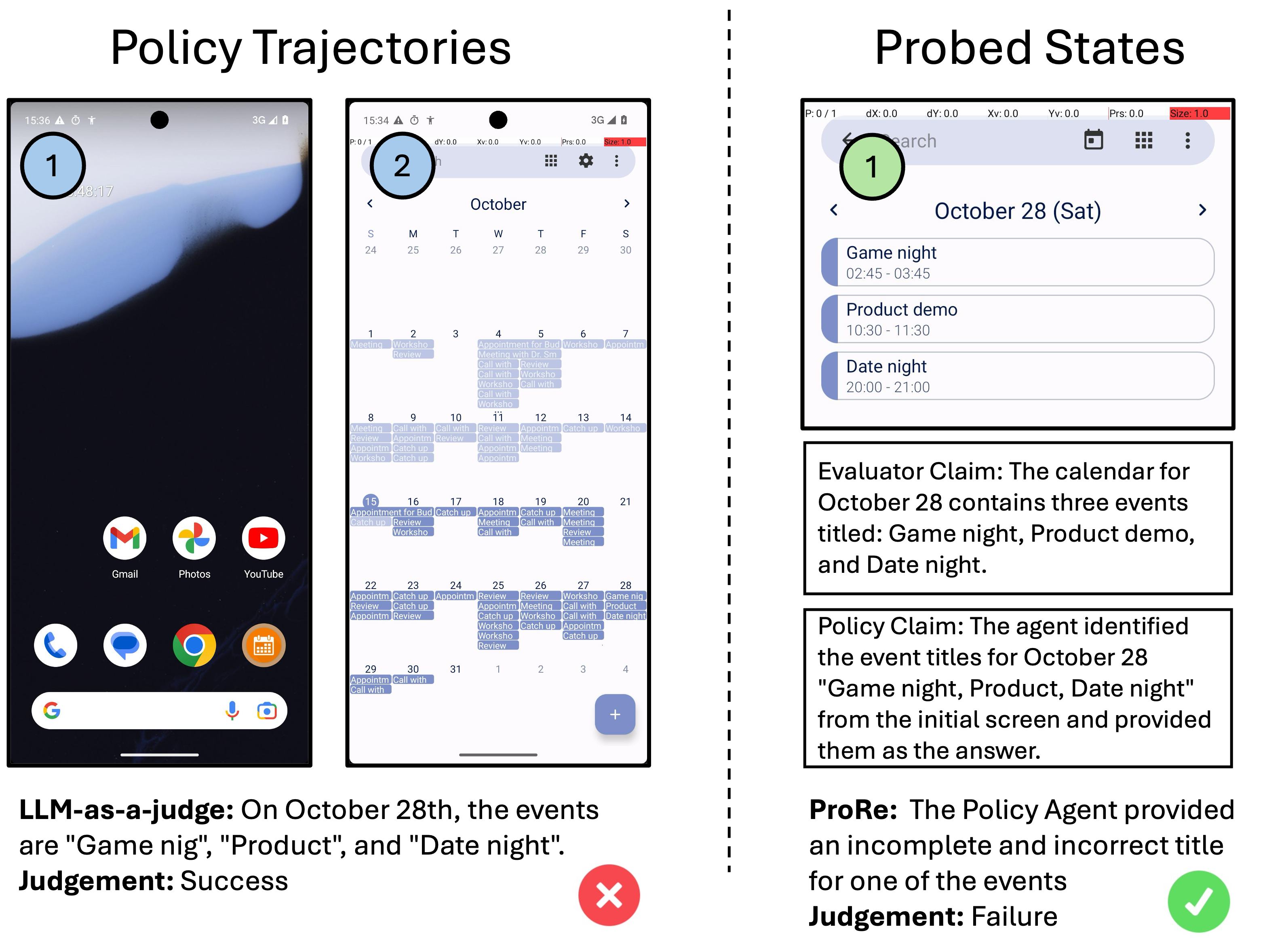}
    \caption{Additional examples. The task is "\textit{Do I have any events October 28 in Simple Calendar Pro? Answer with the titles only. If there are multiples titles, format your answer in a comma separated list.}"}
    \label{fig:add_example3}
\end{figure}

\begin{figure}
    \centering
    \includegraphics[width=0.7\linewidth]{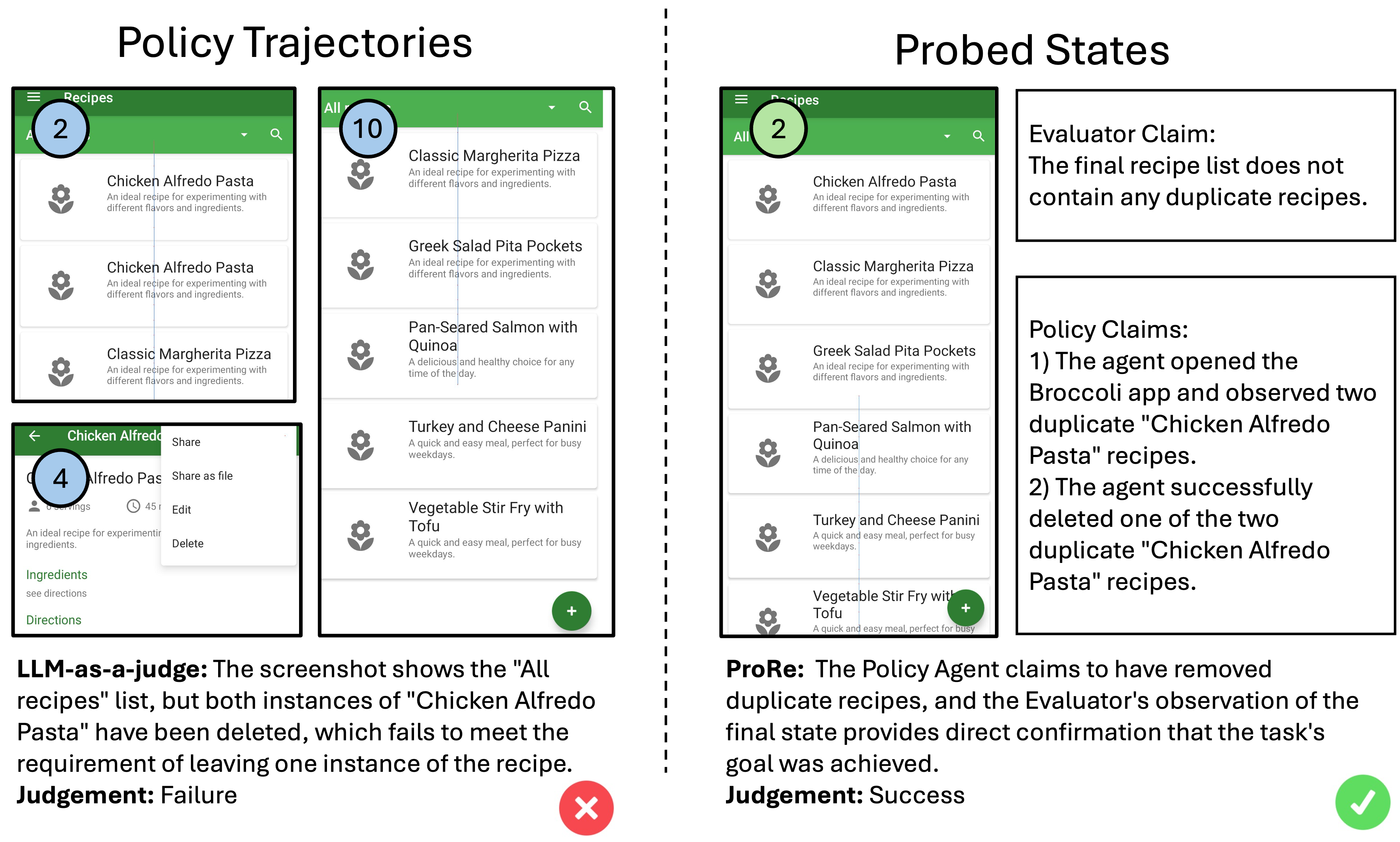}
    \caption{Additional examples. The task is "\textit{Delete all but one of any recipes in the Broccoli app that are exact duplicates, ensuring at least one instance of each unique recipe remains}"}
    \label{fig:add_example4}
\end{figure}

\section{Prompts of \sysname}
\label{appendix:prompts}
For reproducibility, we provide the prompts used in our experiments, covering probing task scheduling, claim generation, and the final judgment with chain-of-claims. During state probing, the Evaluator Agent is invoked with the original prompt format from prior works, but conditioned on the specific evaluation goal.

\subsection{Probing Tasks Scheduling}
\begin{lstlisting}[basicstyle=\ttfamily, breaklines=true]
You are an expert in mobile-UI task verification.
There are two agents:
- The Policy Agent already attempted the task.
- The Evaluator Agent ONLY navigates the UI to probe states about whether the task was or wasn't completed. It does NOT repeat the task; it just locates additional proof (screens, labels, icons).

Your job:
1. Write some analysis explaining what UI evidence/states would confirm the task is done.
2. Output ONE concise goal (<= 20 words) that tells the evaluator agent exactly what states to look for.
3. When the original task involves multiple key states, you may decompose the verification into a sequence of probing goals, with each goal focusing on a specific state.

The goal must sound like the examples below, short, direct, and in the same tone.

### Style Examples
"What is the cheapest flight from Los Angeles to Tokyo using Skyscanner?"
"What are the 1M to 3M GBP to EUR exchange rates?"
"go to settings and make weeks start on Monday in simple calendar"
"Mark Hamlet as read in Cantook."

### Your turn
Original task: {goal}

A previous state probing task was:
{previous_state_probing_task}

The Evaluator Agent probed the following states:
{collected_info}

Revise the probing task based on the previous probing task and the original task:
Respond exactly as:
Analysis: <outline the users expectations and exact UI evidence needed, pinpoint why the earlier collection failed, and suggest how to refine the evaluation goal for comprehensive verification>
Goal: <revised concise goal>
Do not add anything else.
\end{lstlisting}

\subsection{Chaims Generations}

\begin{lstlisting}[basicstyle=\ttfamily, breaklines=true, breakatwhitespace=true]
You are an expert in evaluating the performance of a mobile GUI agent.
**Workflow overview:**  
1. **User** provides a task intent.  
2. The **Policy Agent** executes UI actions to fulfil that task; its steps are recorded as *Action History*.  
3. The **Evaluator Agent** runs after the Policy Agent has finished, and proactively interact with the environment to gather additional observations.
4. **You** will now produce concise **claims** for the **{role.capitalize()} Agent** only.

You must follow a step-by-step analysis:
1. Read the **Task Goal** and the {role.capitalize()} Agent's action history (if available).
2. Examine the provided {role.capitalize()} screens (HTML + screenshots are attached in order).
3. Synthesize related observations into claims. Each claim must:
   - List the supporting step indices.
   - Give a brief, evidence-grounded rationale.
   - State a concise, goal-relevant claim.
4. Include any details critical to the final judgment directly in the claims (e.g., specific titles, timestamps, targets, confirmations, error messages).
6. Do **not** judge final success/failure here; only produce claims.

------  INPUTS  ------
TASK GOAL:
{intent}

ACTION HISTORY ({role.capitalize()} Agent):
{action_history if action_history else "[No action history provided]"}

HTML STATES (TRACE of {role.capitalize()} Agent): 
{html_text_block}

------ OUTPUT GUIDELINES ------
{guidelines}

------ OUTPUT SCHEMA ------
{{
  "{role_key}": [
    {{
      "steps": [<list of step numbers>],
      "reasoning": "<brief explanation of why this claim is justified>",
      "claim": "<concise, goal-relevant claim>"
    }},
    ...
  ]
}}

Return only the JSON under **Claims:**
\end{lstlisting}

Guidelines for Policy Agent to write claims:

\begin{lstlisting}[basicstyle=\ttfamily, breaklines=true, breakatwhitespace=true]
**Guidelines for writing claims (Policy Agent):**
### Core Mandate: The Actor's Report
- Think of yourself as the agent actively performing the task. Your claims are a direct report of your own actions and their immediate results.
- Your goal is to narrate your journey through the task, focusing only on the steps you took and the UI states you directly observed or caused.
- Be concise, factual, and strictly focused on the task goal. Avoid speculation or opinions about why something happened.

### Claim Generation Rules:
- Aim for {min_claims}-{max_claims} claims total.
- **For Tasks Involving Information Sources (e.g., "from an image," "using the details in the file"):**
  - **1. Access the Source:** Generate a claim confirming you **accessed and viewed the specified information source**.
    - *Example:* "The agent opened `expenses.jpg` in the gallery to view the expense details."
  - **2. Confirm Data Match:** In the claim about entering the data, explicitly state that the **data entered matches the data from the source**.
    - *Example:* "The two expenses entered, 'Office Supplies for 150' and 'Travel Expenses for 200', match the content of `expenses.jpg`."
- **For Editing, Modifying, or Deletion Tasks:**
  - **1. Capture the 'Before' State:** First, generate a claim that **describes the initial state of the item BEFORE the modification**.
    - *Example:* "Before editing, the contact's phone number was '555-123-4567'."
  - **2. Report the 'After' State:** Then, generate a separate claim describing the **successful modification or deletion**.
    - *Example:* "The contact's phone number was successfully updated to '555-987-6543'."
- **Report All Critical Actions:**
  - Describe your key actions and their direct consequences using state/action phrasing (e.g., "Recording saved and appears in list").
  - Highlight any mismatches, errors, or unintended actions you performed (e.g., "Opened the wrong menu," "A 'Permission Denied' error appeared").
- **Be Efficient and Relevant:**
  - Merge duplicate claims that describe the same state.
  - Ignore trivial system indicators (battery, clock, signal), home/launcher screens, and redundant repeated actions unless they are evidence of an error or loop.
- Output must be valid JSON following the schema below.
\end{lstlisting}

Guidelines for the Evaluator Agent to write claims.
\begin{lstlisting}[basicstyle=\ttfamily, breaklines=true, breakatwhitespace=true]
**Guidelines for writing claims (Evaluator Agent):**

### Core Mandate: The Detective Analogy
- Think of yourself as a detective arriving at a scene *after* the suspect (the Policy Agent) has left.
- The action history and screenshots you see are your own investigation, using your 'magnifying glass' and 'tools' to inspect the scene.
- Your goal is to make claims about the state of the scene **as the Policy Agent left it**.
- **You must NEVER create a claim about your own investigative actions.** For example, if you tap 'Save' or 'Delete' to check a confirmation dialog, you must not claim "The agent saved the file" or "The agent deleted the item." Your actions are not part of the evaluated task.

## Claim Generation Rules:
- Aim for {min_claims}-{max_claims} claims total.
- **Focus on the evidence:** All claims must describe the final state resulting from the Policy Agent's work, using your observations as proof.
- **Be factual and concise:** Merge duplicates and report on what is present or missing. Avoid speculation.
- **Identify mismatches:** If your investigation reveals that the final state contradicts the task goal (e.g., wrong file type, incorrect note name, content not saved, settings not changed), these are critical claims to include.
- **Ignore trivial states:** Do not report on system indicators (battery, clock), home screens, or app launchers unless directly relevant to the task goal.
- **Phrase claims effectively:** Prefer state/action summaries (e.g., "Recording saved and appears in list") over simple lists of UI elements.
- Output must be valid JSON following the schema below.
\end{lstlisting}

\subsection{Judge with Chain-of-Claims}
\begin{lstlisting}[basicstyle=\ttfamily, breaklines=true, breakatwhitespace=true]
You are an expert judge evaluating whether a mobile GUI agent (Policy Agent) has completed the user's task.  

**Workflow overview:**  
1. **User** provides a task intent.  
2. The **Policy Agent** executes UI actions to fulfil that task; its steps are recorded as *Action History*.  
3. The **Evaluator Agent** runs after the Policy Agent has finished, and proactively probes the resulting states to gather additional observations.
4. Your job is to analyze these claims together, identify their relationships, and determine whether the Policy Agent successfully completed the task.  

You must follow a two-stage analysis:  

### Stage 1 - Filter Evaluator Claims
- Carefully review the evaluator claims.  
- **Discard any claim that describes actions or outcomes caused by the Evaluator Agent itself** (e.g., accidental saves, unintended edits, stray taps/scrolls).  
- Keep only evaluator claims that serve as **evidence about the Policy Agent's actual outcome**.  
- If in doubt, prefer to exclude rather than include.  

### Stage 2 - Compare Policy vs. Evaluator Claims
1. **Read the Task Goal** carefully to understand what success means.  
2. **Compare Policy Claims and (filtered) Evaluator Claims**:  
   - Mark as **confirmed** if an evaluator claim supports a policy claim.  
   - Mark as **contradicted** if an evaluator claim directly disproves a policy claim.  
   - Mark as **complementary** if the evaluator provides additional relevant evidence.  
   - Mark as **unsupported** if no evaluator claim addresses a policy claim.  
3. Highlight any **critical confirmations or contradictions** that directly determine success.  
4. Decide the outcome reward: did the Policy Agent achieve the user's task goal?  

**Guidelines:**  
- Before labeling a contradiction, check if the agents are simply observing different aspects of the same content (e.g., Policy saw page 1, Evaluator scrolled to page 2).
  - If so, their claims are **complementary**. Your job is to **synthesize** them into a single, more complete understanding.
- When claims are in direct conflict, act as a critical arbiter rather than a passive matcher. Evaluate reliability and consistency; do not assume both sides are equally valid.
- Consider the correctness of the **target** (e.g., the right file, event, app).  
- For question-answer tasks, the Policy Agent must include an explicit claim stating the answer it provided, expressed exactly as required by the task.
- Ignore evaluator stray/accidental claims unrelated to the goal.  
- If claims indicate progress but also critical issues (wrong extension, malicious steps), treat as compensated or failure depending on severity.  

------  INPUTS  ------
TASK GOAL:  
{intent}

POLICY CLAIMS:  
{policy_claims}

EVALUATOR CLAIMS:  
{evaluator_claims}

------ OUTPUT INSTRUCTIONS ------  
Write your reasoning in two sections:  

Analysis:  
- Stage 1: List which evaluator claims you discarded and why.  
- Stage 2: Compare the remaining evaluator claims against the policy claims, showing relations (confirmed, contradicted, complementary, unsupported).  
- Explain how these relations support your judgment.  

Status: success or failure  

Return only these two sections, exactly in this format.  
\end{lstlisting}

\section{Cost Comparison} \label{appendix:cost_comparison}
We further analyze and compare the cost between \sysname and the baselines from per-task cost and the long-term cost perspective. The per-cost cost comparison is detailed in Table \ref{tab:cost_comparison}. 

\begin{table}[h]
\centering
\caption{Per-task cost comparison across baselines.}
\begin{tabular}{lccccc}
\toprule
Methods & DistRL & DigiRL & WebRL & StepCritic & \sysname \\
\midrule
Cost (\$) & 0.010 & 0.014 & 0.013 & 0.017 & 0.063 \\
\bottomrule
\end{tabular}
\label{tab:cost_comparison}
\end{table}

While \sysname incurs additional computational overhead, primarily due to proactive probing and the chain-of-claims mechanism, it achieves significantly higher reward accuracy and F1 scores, as demonstrated in Table \ref{tab:policy-agent-results} and Figure \ref{fig:avg_bench}. We believe that enhanced reward accuracy ultimately translates to greater efficiency when deploying such a reward system. Therefore, we conducted the additional measurements and analysis.

Firstly, the reward system can be employed to guide test-time scaling of policy agents. As illustrated in Fig. \ref{fig:test-time-scaling-aw}, \sysname facilitates markedly more efficient test-time scaling: the policy agent attains a 63\% success rate with only 2 trials under \sysname, whereas the baseline approaches require at least 5 trials, resulting in a $2.5\times$ speedup.

Secondly, the reward system can be employed during training to guide the roll-outs of policy agents. In this setting, the total cost consists of both the rollout cost and the reward evaluation cost. To quantify this, we estimate the cost of collecting 1,000 correctly identified successful trajectories.

A correctly identified successful trajectory requires both (i) a successful rollout by the policy agent, and (ii) the evaluator correctly recognizing it as success. Assuming a 60\% policy success rate and using the average accuracies in Table \ref{tab:policy-agent-results} (93.7\% for \sysname and 88.4\% for StepCritic), the probability of obtaining one useful trajectory is $0.60\times Acc$. Consequently, \sysname requires approximately 1,780 rollouts, whereas StepCritic requires 1,885 rollouts—i.e., StepCritic needs 110 additional rollouts to achieve the same amount of useful data. The corresponding evaluator costs are \$112.1 for \sysname and \$32.1 for StepCritic.

\begin{table}[th]
\centering
\caption{Long-term cost comparison (1,000 useful trajectories)}
\label{tab:cost_longterm}
\begin{tabular}{lccccc}
\toprule
Method & Avg Acc (\%) & Rollouts & Rollout Cost & Reward Cost & Total Cost \\
\midrule
\sysname      & 93.7 & 1778.7 & 1636.4 & 112.1 & 1748.5 \\
StepCritic & 88.4 & 1885.4 & 1734.5 & 32.1  & 1766.6 \\
WebRL      & 86.9 & 1917.9 & 1764.5 & 24.9  & 1789.4 \\
DistRL     & 86.1 & 1935.7 & 1780.9 & 19.4  & 1800.2 \\
DigiRL     & 84.6 & 1970.1 & 1812.5 & 27.6  & 1840.0 \\
\bottomrule
\end{tabular}
\end{table}

This leads to the following cost difference:

\[
Cost_{\text{StepCritic}} - Cost_{\text{\sysname}}
= 110 \times Cost_{\text{Rollout}} - 85.6
\]

\sysname becomes more economical once the rollout cost exceeds \$0.78. Using Azure A100 pricing (\(\approx\$3.67/\text{hour}\)), a typical 72B GUI agent rollout with 30 steps (30 s/step) costs roughly \$0.92, already above this threshold. Thus, under realistic deployment conditions where rollout cost dominates (GPU hosting, LLM inference, environment rendering), \sysname becomes more cost-effective for large-scale training and evolution, despite its higher per-task evaluation overhead.

Similar deductions apply to all other baselines. Table~\ref{tab:cost_longterm} summarizes the total cost of collecting 1{,}000 useful trajectories for each method.

\end{document}